\documentclass[11pt,a4paper,final]{article}
\pdfoutput=1
\usepackage[utf8]{inputenc}
\usepackage{amsmath}
\usepackage{amsfonts}
\usepackage{amssymb}
\usepackage{graphicx}
\usepackage{xcolor}
\usepackage[left=1in,right=1in,top=1in,bottom=1in]{geometry}
\usepackage[breakable]{tcolorbox}

\usepackage{booktabs} 
\usepackage[ruled, noend]{algorithm2e}
\usepackage{tikz}
\usetikzlibrary{arrows}
\usetikzlibrary{shapes.multipart}
\usepackage{caption}

\SetAlFnt{\small}
\SetAlCapFnt{\small}
\SetAlCapNameFnt{\small}
\SetAlCapHSkip{0pt}
\IncMargin{-\parindent}

\usepackage{natbib}
\setcitestyle{authoryear}

\usepackage[T1]{fontenc}    
\usepackage{hyperref}       
\newcommand{\declarecolor}[2]{\definecolor{#1}{RGB}{#2}\expandafter\newcommand\csname #1\endcsname[1]{\textcolor{#1}{##1}}}

\newcommand{\cH}{\mathcal{H}}

\declarecolor{White}{255, 255, 255}
\declarecolor{Black}{0, 0, 0}
\declarecolor{Maroon}{128, 0, 0}
\declarecolor{Coral}{255, 127, 80}
\declarecolor{Red}{182, 21, 21}
\declarecolor{LimeGreen}{50, 205, 50}
\declarecolor{DarkGreen}{0, 90, 0}
\declarecolor{Navy}{0, 0, 128}

\definecolor{plotblue}{HTML}{377eb8}
\definecolor{plotorange}{HTML}{ff7f00}
\definecolor{plotgreen}{HTML}{4daf4a}

\hypersetup{
    colorlinks=true,
    citecolor=DarkGreen,
    linkcolor=Navy,
    filecolor=magenta,      
    urlcolor=black,
    pdfpagemode=FullScreen,
}
\usepackage{url}            
\usepackage{booktabs}       
\usepackage{nicefrac}       
\usepackage{microtype}      
\usepackage{mathtools}
\usepackage{amsthm}
\usepackage{bbm}
\usepackage{tikz} 
\usepackage{wrapfig}
\usepackage{floatrow}
\usepackage{enumitem}
\usepackage{authblk}
\usepackage[capitalise,noabbrev]{cleveref}
\usepackage{booktabs}

\usepackage{color-edits}
\addauthor{kh}{red}
\addauthor{iws}{red}
\addauthor{pa}{cyan}

\usepackage{caption}
\usepackage{subcaption}
\usepackage{thm-restate}

\usepackage{titlesec}

\titlespacing*{\paragraph}
{0pt}      
{1.0ex}    
{0.8em}    

\newtheorem{theorem}{Theorem}[section]
\newtheorem{lemma}[theorem]{Lemma}

\newtheorem{corollary}[theorem]{Corollary}

\newtheorem{assumption}[theorem]{Assumption}

\newcommand{\cX}{\mathcal{X}}
\newcommand{\cY}{\mathcal{Y}}
\newcommand{\E}{\mathbb{E}}
\newcommand{\kl}{\mathrm{KL}}
\newcommand{\rad}{\mathrm{rad}}
\newcommand{\src}{\mathrm{src}}
\newcommand{\cD}{\mathcal{D}}
\newcommand{\rf}{\mathrm{ref}}
\newcommand{\lo}{\mathrm{low}}
\newcommand{\hi}{\mathrm{high}}
\newcommand{\llama}{\texttt{Llama-3.2-1B}\xspace}
\newcommand{\qwen}{\texttt{Qwen3-8B}\xspace}
\newcommand{\gptfivemini}{\texttt{GPT-5-mini}\xspace}
\newcommand{\gptfivenano}{\texttt{GPT-5-nano}\xspace}

\newcommand{\NN}{\mathbb N}
\newcommand{\PP}{\mathbb P}

\DeclareMathOperator*{\argmax}{argmax}

\begin{document}

\title{Post-Training at the Edge of Detectability:\\ A Game-Theoretic Approach to Fine-Tuning}
\author[1]{Keegan Harris}
\author[1]{Brian W. Lee}
\author[1]{Ian Waudby-Smith}
\author[1]{\\Philip Amortila}
\author[1]{Nika Haghtalab}
\author[1,2]{Michael I. Jordan}

\affil[1]{University of California, Berkeley}
\affil[2]{Inria \& \'Ecole Normale Sup\'erieure}
\affil[ ]{\small\texttt{\{keegan.harris,bl.ee,ianws,p.amortila,nika,michael\_jordan\}@berkeley.edu}}
\date{}

\maketitle
\vspace*{-0.4in}

\begin{abstract}
    Reinforcement learning (RL) fine-tuning is widely used in language model training to improve model performance on a target task while limiting drift from a reference policy. 
A standard way to balance this trade-off is via a KL-regularized RL objective, although this formulation does not by itself provide a principled way to set the regularization coefficient. In practice, the coefficient is typically chosen heuristically or via hyperparameter search, which can lead to unnecessary overhead in training cost or undesirable reward–retention trade-offs. 
We instead propose a game-theoretic framework that gives this trade-off an explicit statistical interpretation.
Specifically, we study a sequential game in which an agent chooses a policy to maximize cumulative reward while a monitor observes policy outputs over time and tests for deviations from the reference policy. 
Although not originating from the same perspective, we show that the resulting equilibrium policy can nonetheless be expressed as the solution to a KL-regularized RL problem for an optimal regularization parameter that can be viewed as maximizing reward per unit of statistical distinguishability. 
Drawing on classical results from concave-convex fractional programming, we provide a principled method for learning this equilibrium coefficient via reduction to the KL-regularized RL objective, thus allowing for flexible integration into standard fine-tuning pipelines.
In experiments with \qwen and \llama, we demonstrate that our methods result in competitive reward-retention trade-offs in a continual learning setting, and illustrate how our framework may be used to audit API providers serving open-source models.\looseness-1
\end{abstract}

\section{Introduction}
A common goal in large language model (LLM) fine-tuning is to improve performance on a target objective while preserving useful behaviors inherited from a reference policy.
These objectives may be at odds with one another, and there are many possible ways to formalize the resulting trade-off. 
One of the most widely used approaches is Kullback--Leibler (KL)-regularized reinforcement learning (RL), which augments the reward objective with a penalty for deviating from the reference policy \citep{jaques2017sequence,jaques2019way,neu2017unified,ziegler2019fine,stiennon2020learning,ouyang2022training}. 
In this formulation, the regularization coefficient
%
%
$\beta \geq 0$ determines how aggressively the fine-tuned policy is penalized for deviating from the reference policy (e.g., a pre-trained LLM). 
When $\beta$ is too small, fine-tuning may increase reward at the cost of significantly changing the model’s behavior. 
When $\beta$ is too large, the model remains close to the reference policy, but may fail to adequately learn the target task. 
In practice, this coefficient is often selected by manual tuning or via grid search~\citep[see, e.g.,][]{ouyang2022training,zhang2023wisdom,lin2024mitigating,tang2024understanding}. 
However this approach can waste compute and lead to undesirable reward–retention trade-offs. 
Some implementations instead adapt $\beta$ online to match a prescribed KL target, but this still requires the learner to specify the target divergence and update heuristic in advance~\citep{schulman2017proximal,ziegler2019fine}.


%


We consider an alternative notion of deviation from the reference policy. 
Rather than measuring deviations directly through a distance or similarity metric, we ask how difficult it is to distinguish the fine-tuned policy from the reference policy based on its outputs. 
This naturally leads to a game between an agent who seeks to maximize reward, and a monitor attempting to detect deviations from the reference policy. 
In this game, an agent deploys a policy to maximize their utility, while a monitor observes policy outputs as they are generated and aims to detect whether those outputs have been generated by the (intended) reference policy or by some other (unintended) one. 
If the monitor detects a deviation from the reference policy, deployment is terminated. 
The agent therefore faces a trade-off between increasing reward and remaining statistically indistinguishable from the reference policy. 
We call this the \emph{sequential detection game}.\looseness-1 


\subsection{Our Contributions}
\paragraph{Game-theoretic formulation for RL fine-tuning (Section~\ref{sec:seq-det-game}).} 
We provide a new perspective on RL fine-tuning through the lens of sequential detection. 
Starting from a game-theoretic formulation, we prove that the agent’s Nash equilibrium strategy in the sequential detection game is to fine-tune the reference policy with the standard KL-regularized RL objective. 
Thus, rather than introducing a new fine-tuning objective, our game-theoretic formulation recovers one that is already used in practice, which allows existing RL fine-tuning algorithms to be applied directly. 
At the same time, it gives the KL penalty an operational interpretation as the statistical cost of remaining difficult to distinguish from the reference policy under sequential monitoring. 
Moreover, the equilibrium identifies the regularization coefficient $\beta^\star$ that optimally trades off between reward maximization and statistical distinguishability. 
Unlike conventional RL fine-tuning, where $\beta$ is chosen heuristically, the equilibrium coefficient is determined solely by the reward function, prompt distribution, and reference policy.\looseness-1

\paragraph{Learning the optimal regularization strength (Section~\ref{sec:stoch}).} Building on our equilibrium analysis, we present a stochastic bisection algorithm (Algorithm~\ref{alg:bisect}) that estimates $\beta^\star$ to $\epsilon$-precision by solving $O(\log(1/\epsilon))$ RL sub-problems. 
Algorithm~\ref{alg:bisect} may be interpreted as a game-theoretically principled approach to RL fine-tuning: rather than relying on a separate hyperparameter search, it adaptively computes the equilibrium regularization strength.
The analysis of Algorithm~\ref{alg:bisect} combines classical techniques from the literature on fractional programming \citep{dinkelbach1967nonlinear, schaible1976fractional} and sequential hypothesis testing \citep{robbins1974expected}.\looseness-1 

\paragraph{Continual learning and model auditing experiments (Section~\ref{sec:expts}).}
We evaluate our framework in both continual learning and model auditing settings using \qwen~\citep{yang2025qwen3} and \llama \citep{meta2024llama321b}. 
In continual learning, we find preliminary evidence that the equilibrium regularization coefficient yields better reward–retention trade-offs than compute-matched manual tuning. 
In model auditing, we show that the monitor’s equilibrium-prescribed test detects hidden model modifications by open-source API providers more quickly than natural baselines while controlling the type-I error rate.

\section{Preliminaries}\label{sec:background}
We now review the key ingredients from RL fine-tuning, sequential hypothesis testing, and game theory that will later be combined in our analysis.
\paragraph{RL fine-tuning.} Consider a prompt space $\cX$ and response space $\cY$. 
A policy $\pi$ is a mapping from prompts to distributions over responses, i.e., $\pi : \cX \rightarrow \Delta(\cY)$. 
We use $\pi(y | x)$ to denote the probability of receiving response $y$ given prompt $x$, and $y \sim \pi(\cdot | x)$ to denote a sample from the conditional distribution over responses given $x$.\footnote{LLMs generate tokens auto-regressively, but for our purposes it suffices to consider only the induced distribution over complete responses.}

Given a reference policy $\pi_{\rf}$, a reward function $r : \cX \times \cY \rightarrow \mathbb{R}$, the KL-regularized RL fine-tuning objective for a regularization parameter $\beta \geq 0$ is defined by
\begin{equation}\label{eq:klrl}
    \max_{\pi}  \left \{ \E_{y \sim \pi(\cdot | x)}[r(x,y)] - \beta \cdot \E \left[ \kl (\pi(\cdot | x), \pi_{\rf}(\cdot | x)) \right] \right \},
\end{equation}
where 
$\kl (\pi(\cdot | x), \pi_{\rf}(\cdot | x)) := \sum_{y \in \cY} \pi(y | x) \log \frac{\pi(y | x)}{\pi_{\rf}(y | x)}$ is the KL divergence between $\pi(\cdot | x)$ and $\pi_{\rf}(\cdot | x)$, and expectations are (also) taken over the prompt distribution $\cD$. 
This optimization admits a closed-form optimal solution \citep{donsker1975variational,rafailov2023direct} of the form\looseness-1 
\begin{equation}\label{eq:tilt}
    \pi_{\beta}(y | x) := \pi_{\rf}(y | x) \frac{\exp(r(x, y)/\beta)}{Z_{\beta}(x)},
\end{equation}
where $Z_{\beta}(x) := \E_{y \sim \pi_{\rf}(\cdot | x)}[\exp(r(x, y)/\beta)]$ is the partition function. We call $\pi_{\beta}$ the tilt of $\pi_{\rf}$. 
Since $Z_{\beta}(x)$ is generally intractable to compute due to the size of $\cY$, RL methods like Proximal Policy Optimization~\citep{schulman2017proximal} and Group Relative Policy Optimization~\citep{shao2024deepseekmath} are typically used to approximately solve Equation \eqref{eq:klrl}.\looseness-1

\paragraph{Sequential hypothesis testing.} 
Sequential hypothesis testing is a paradigm of statistical inference for which type-I errors (false positive rates) are controlled not only at pre-determined sample sizes, but also at stopping times \citep{wald1945sequential,wald1947sequential}. Informally, sequential hypothesis tests allow analysts to ``peek'' at their data routinely to adaptively stop experiments and make conclusions for data-dependent reasons.

In the simplest case of simple nulls and simple alternatives, one sees a sequence of observations $z_1, z_2, \ldots$ from a distribution $P_z$ and the goal is to determine whether $P_z = P$ (the null) or $P_z = Q$ (the alternative). 
A sequential hypothesis test, $\phi\equiv (\phi_t)_{t \in \NN}$, is a sequence of functions $\phi_t \equiv \phi(z_1, \dots, z_t)$ outputting either ``reject'' or ``do not reject'' at each time step $t \in \NN$. For a fixed $\alpha \in (0, 1)$, a test is said to be \emph{$\alpha$-correct} if it controls the type-I error at all sample sizes simultaneously; i.e., $P(\exists \; t \in \NN : \phi_t \text{ rejects}) \leq \alpha$. 
Letting $\tau_\alpha \equiv \tau_\alpha(\phi) = \inf \{ t \in \NN : \phi_t \text{ rejects} \}$ be the \emph{stopping time}, notice that $\alpha$-correctness is equivalent to the condition $P(\tau_\alpha < \infty) \leq \alpha$. 
A test is said to be \emph{power-one} if $Q(\tau < \infty) = 1$.



It is a classical result that for any $\alpha$-correct, power-one
sequential test, $\mathbb{E}_{Q}[\tau_{\alpha}] \geq \frac{\log(1/\alpha)}{\kl(Q, P)}$; see \citet{wald1945sequential} and \citet{robbins1974expected}. 
This lower bound is known to be tight in the high-confidence regime, in the sense that there exist valid $\alpha$-correct, power-one tests for which 
\begin{equation}\label{eq:order}
    \lim_{\alpha \downarrow 0}\frac{\mathbb{E}_{Q}[\tau_{\alpha}]}{\log(1/\alpha)} = \frac{1}{\kl(Q, P)}.
\end{equation}
A canonical test with this leading-order behavior is the sequential probability ratio test (SPRT) \citep{wald1945sequential} given by
\begin{equation*}
    \phi_t = \mathbbm{1} \left \{ \prod_{i=1}^t \frac{dQ(z_i)}{dP(z_i)} \geq \frac{1}{\alpha}\right \}.
\end{equation*}

\paragraph{Nash equilibria.}
A two-player game consists of strategy spaces $\mathcal{S}_1$, $\mathcal{S}_2$ and utility functions $u_1,u_2 : \mathcal{S}_1 \times \mathcal{S}_2 \rightarrow \mathbb{R}$.
Given a strategy $s_2 \in \mathcal{S}_2$ for player 2, a strategy $s_1 \in \mathcal{S}_1$ is a \emph{best response} for player 1 if $s_1 \in \arg\max_{s_1' \in \mathcal{S}_1} u_1(s_1',s_2).$
Best responses for player 2 are defined analogously. 

A strategy profile $(s_1^*,s_2^*)$ is a \emph{Nash equilibrium} if both players' strategies are simultaneously best responses: 
\[
u_1(s_1^*,s_2^*) \geq u_1(s_1,s_2^*)
\;\; \text{for all } s_1 \in \mathcal{S}_1 \quad \text{and } \quad u_2(s_1^*,s_2^*) \geq u_2(s_1^*,s_2)
\;\; \text{for all } s_2 \in \mathcal{S}_2.
\]
Thus, at a Nash equilibrium, neither player can improve their utility by unilaterally changing their strategy; i.e., each player's behavior is optimal given the behavior of the other.
\section{The Sequential Detection Game}\label{sec:seq-det-game}
We explore differences between reference policies and target policies from the perspective of a monitor aiming to quickly distinguish one from the other in a two-player (general sum) game against a strategic agent.
%

%

\begin{tcolorbox}[boxrule=0pt, frame empty, breakable]

\textbf{Sequential detection game:}
\vspace{0.1cm}

Consider a setup where a monitor intends to deploy a policy $\pi_\rf$ while an agent potentially deploys a policy $\Tilde{\pi} \neq \pi_\rf$ in place of $\pi_\rf$. The monitor intends to detect such a change from prompt-response outputs while controlling the type-I error rate at a desired level $\alpha \in (0, 1)$.\looseness=-1 \\
Concretely, for each time step $t \in \NN$:
\begin{enumerate}
    \item A prompt $x_t \sim \cD$ is generated from the prompt distribution $\cD$ and a response $y_t \sim \Tilde{\pi}(\cdot \mid x_t)$ is generated from the agent's deployed policy.
    \item The monitor updates a sequential test $\phi_t \equiv \phi((x_1,y_1), \dots, (x_t, y_t))$ subject to the type-I error constraint: $\PP_{y\sim \pi_\rf} \left ( \exists \; t \in \NN : \phi_t \text{ rejects} \right ) \leq \alpha$.
    \item If $\phi_t$ rejects, then the monitor stops deployment, and the agent receives utility $\sum_{i=1}^t r(x_i, y_i)$ according to reward function $r : \cX \times \cY \rightarrow \mathbb{R}$, while the monitor receives utility $-t$.
\end{enumerate}
\end{tcolorbox}
In this sequential detection game, the monitor stops deployment if and only if $\phi_t$ rejects. In other words, their stopping rule is characterized by the stopping time $\tau_\alpha = \inf \{ t\in \NN : \phi_t \text{ rejects} \}$, 
with the convention that $\tau_\alpha = \infty$ if $\phi_t$ never rejects. From this vantage point, the agent wishes to deploy a policy $\Tilde{\pi}$ to maximize their expected cumulative reward before detection, $\E_{y \sim \Tilde{\pi}(\cdot \mid x)}[\sum_{i=1}^{\tau_\alpha} r(x_i, y_i)]$, while the monitor wishes to minimize $\E_{y \sim \Tilde{\pi}(\cdot \mid x)}[\tau_\alpha]$. 

The agent's optimal choice of $\Tilde{\pi}$ turns out to be precisely a KL-regularized tilt of the reference policy, while the monitor's best response turns out to be a likelihood ratio test between $\pi_\rf$ and $\Tilde{\pi}$. To state these facts formally, we first require two assumptions on the agent's reward function $r$.\looseness-1

%
%
%

%
\begin{assumption}[The agent can and must strategize to benefit]\label{ass:pos}
    It is possible for the agent to achieve positive expected utility. That is, there exists a policy $\pi$ such that $\E_{y \sim \pi(\cdot | x)}[r(x, y)] > 0$. Moreover, they must strategize in order to do so, i.e., $\mu_{\rf} := \E_{y \sim \pi_{\rf}(\cdot | x)}[r(x, y)] \leq 0$.
\end{assumption}

\begin{assumption}[The monitor can always devise a detection strategy]\label{ass:finite-expected-stopping}
    For every $\pi'\neq \pi_\rf$, there exists a test $(\phi_t')_{t\in \NN}$ with a finite expected stopping time, i.e., $\E_{\pi'}[\tau_\alpha(\phi')] < \infty$.
\end{assumption}
Assumption~\ref{ass:pos} provides sufficient conditions for the agent to want to participate in the game and to learn a non-trivial strategy.\footnote{If the latter half of \cref{ass:pos} does not hold, the agent can deploy policy $\pi_{\rf}$ and collect infinite reward.} 
For the necessity of strategization in Assumption~\ref{ass:pos}, it is mathematically equivalent to assume that $\mu_{\rf} - c \leq 0$, where $c \geq 0$ is the minimum improvement in expected reward required for the agent to want to update the reference policy. 
The constant $c$ may also be interpreted as capturing the cost associated with model training (e.g., monetary costs, time spent, effort exerted, etc.).
Assumption~\ref{ass:finite-expected-stopping} is an analogous assumption on the monitor's incentive to play the game. 
It is a weak assumption that is satisfied for likelihood ratios in all but pathological cases which we eschew for the purposes of this paper. 

In what follows, we characterize the equilibrium of the sequential detection game. These statements should be interpreted as holding in the so-called ``high-confidence'' regime, i.e., where
$\alpha \downarrow 0$.  
This is because the sequential hypothesis tests we consider take place in discrete time: they observe one (prompt, response) pair at a time and can only stop after an integer number of observations. 
\citet{robbins1974expected} provide an information-theoretic lower bound $\log(1/\alpha)/\kl(Q,P)$ for the expected stopping time of any $\alpha$-correct sequential test. 
This bound is tight at the leading $\log(1/\alpha)$ order, but the corresponding upper bounds can include lower-order boundary-crossing effects as an artifact of the discrete-time nature of the test~\citep{siegmund1985sequential}. 
Therefore the following equilibrium characterization can be thought of as stating that unilateral deviations cannot improve either player's utility \emph{at the leading $\log(1/\alpha)$ order}.\looseness-1

\paragraph{Equilibrium characterization.}
We begin with the monitor's equilibrium strategy. 
Suppose the agent is playing policy $\Tilde{\pi}$. 
Targeting the true agent policy $\Tilde{\pi}$ maximizes the expected evidence accumulated per sample and therefore minimizes the expected stopping time. 
Moreover, it is known that SPRTs exhibit the
optimal leading order behavior of Equation~\ref{eq:order}. 
Therefore the monitor’s  best-response to agent policy $\Tilde{\pi}$ is to test for it using an SPRT.

The agent's goal is to pick their policy to maximize $\E \left[\sum_{t=1}^{\tau_{\alpha}} r(x_t, y_t) \right]$, which can be written as $\E[\tau_{\alpha}] \cdot \E_{y \sim \pi(\cdot | x)}[r(x,y)]$ by Wald's equation, since the expected stopping time will be finite under Assumptions~\ref{ass:pos} and~\ref{ass:finite-expected-stopping}. 
By the \citeauthor{robbins1974expected} lower bound, the agent can guarantee themselves utility at least $\log(1/\alpha) \E_{y \sim \pi(\cdot | x)}[r(x,y)] / \E[\kl(\pi(\cdot | x), \pi_{\rf}(\cdot | x))]$, which is also an (asymptotic) upper bound on their utility under the monitor's best-response SPRT. 
Since neither $\E_{y \sim \pi(\cdot | x)}[r(x,y)]$ nor $\E[\kl(\pi(\cdot | x), \pi_{\rf}(\cdot | x))]$ depend on $\alpha$, the agent's optimization takes the form\looseness-1 
\begin{equation}\label{eq:frac}
    \max_{\pi} \frac{\E_{y \sim \pi(\cdot | x)}[r(x,y)]}{\E[\kl(\pi(\cdot | x), \pi_{\rf}(\cdot | x))]},
\end{equation}
where we use the convention that $0/0 = 0$.
Equation~\eqref{eq:frac} is a fractional program, with a concave numerator and a convex, strictly positive denominator (excluding the singular point $\pi=\pi_{\rf}$, which cannot be optimal under Assumption~\ref{ass:pos}). 
As such, we can draw on the rich literature on concave-convex fractional programming~\citep[see, e.g.,][]{dinkelbach1967nonlinear, schaible1976fractional}, which shows that while such fractional optimization problems are not concave, they are quasi-concave and can be solved iteratively through a sequence of concave optimization problems.
\begin{theorem}
    Suppose that for each iteration $k$, the quantities $\pi^{(k)}$ and $\beta^{(k)}$ are chosen by $\pi^{(k)} \in \argmax_{\pi} \left \{ \E_{y \sim \pi(\cdot | x)}[r(x,y)] - \beta^{(k)} \E[\kl(\pi(\cdot | x), \pi_{\rf}(\cdot | x))] \right \}$
    and $\beta^{(k+1)} \leftarrow \frac{\E_{y \sim \pi^{(k)}(\cdot | x)}[r(x,y)]}{ \E[\kl(\pi^{(k)}(\cdot | x), \pi_{\rf}(\cdot | x))]}$. 
    Then it holds that $\beta^{(k)} \to \beta^\star$, where 
    \begin{equation*}
        \pi_{\beta^{\star}} \in \argmax_{\pi} \left \{ \E_{y \sim \pi(\cdot \mid x)} [r(x,y)] - \beta^\star \E[\kl(\pi(\cdot \mid x), \pi_\rf(\cdot \mid x)] \right \}
    \end{equation*}
    and $\pi_{\beta^\star}$ solves the fractional program \eqref{eq:frac}.
\end{theorem}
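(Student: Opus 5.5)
This is Dinkelbach's method, and I would run the standard Dinkelbach argument with the notation $N(\pi) := \E_{y\sim\pi(\cdot|x)}[r(x,y)]$ and $D(\pi) := \E[\kl(\pi(\cdot|x),\pi_\rf(\cdot|x))]$. The central object is the parametric value function
\begin{equation*}
F(\beta) := \max_\pi \{N(\pi) - \beta D(\pi)\}, \qquad \beta \ge 0.
\end{equation*}
By \eqref{eq:tilt}, the maximizer is the tilt $\pi_\beta$, and $F(\beta) = \beta\,\E[\log Z_\beta(x)]$. The function $F$ is a pointwise maximum of affine functions of $\beta$, so it is convex and nonincreasing, since $D \ge 0$. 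Moreover, $F(\beta)\ge N(\pi_\rf)-\beta\cdot 0=\mu_\rf$.

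\textbf{Step 1: the optimal ratio is the root of $F$.} Let $\lambda^\star := \sup_\pi N(\pi)/D(\pi)$, which is positive by Assumption~\ref{ass:pos}. For any $\pi\ne\pi_\rf$ we have $N(\pi)-\lambda^\star D(\pi)\le 0$, and at $\pi_\rf$ the same quantity equals $\mu_\rf\le 0$. Hence $F(\lambda^\star)\le 0$. Conversely, if $F(\beta)=0$ is attained at some $\pi_\beta\neq\pi_\rf$, then $N(\pi_\beta)/D(\pi_\beta)=\beta$, while every other $\pi$ satisfies $N(\pi)/D(\pi)\le\beta$. So $\beta$ is the optimal ratio and $\pi_\beta$ solves \eqref{eq:frac}. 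Setting $\beta^\star$ to be the root of $F$ therefore gives both claimed properties, provided the root exists and is attained away from $\pi_\rf$.

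For existence, note that as $\beta\downarrow 0$, $F(\beta)\to\sup_\pi N(\pi)>0$. For $\beta$ large, $\pi_\beta\to\pi_\rf$, so $F(\beta)\to\mu_\rf\le 0$. By continuity there is a root. In the boundary case $\mu_\rf=0$, I would take $\beta^\star$ to be the smallest root and argue that there $D(\pi_{\beta^\star})>0$.

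\textbf{Step 2: monotone convergence of the iterates.} Set $\beta^{(k+1)}=N(\pi^{(k)})/D(\pi^{(k)})$. Then
\begin{equation*}
F(\beta^{(k)}) = N(\pi^{(k)})-\beta^{(k)}D(\pi^{(k)}) = D(\pi^{(k)})\bigl(\beta^{(k+1)}-\beta^{(k)}\bigr).
\end{equation*}
Whenever $\beta^{(k)}<\beta^\star$ we have $F(\beta^{(k)})>0$, so $\beta^{(k+1)}>\beta^{(k)}$. Every $\beta^{(k+1)}$ is a feasible ratio, so $\beta^{(k+1)}\le\beta^\star$. If we start at or above $\beta^\star$, one step brings us to a feasible ratio, hence to a value at most $\beta^\star$. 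After that, the sequence is nondecreasing and bounded above by $\beta^\star$, so it converges to some limit $\bar\beta$.

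It remains to identify $\bar\beta=\beta^\star$. Taking limits in the identity above, $F(\bar\beta)=\lim D(\pi^{(k)})(\beta^{(k+1)}-\beta^{(k)})=0$, provided $D(\pi^{(k)})$ stays bounded. This holds because $D(\pi_\beta)$ is nonincreasing in $\beta$ (it is minus the slope of $F$) and the $\beta^{(k)}$ are bounded below by $\beta^{(1)}>0$. Since $F$ is continuous, $F(\bar\beta)=0$. Strict monotonicity of $F$ near its root, from $D(\pi_\beta)>0$, then forces $\bar\beta=\beta^\star$.

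\textbf{Main obstacle.} The delicate part is the degeneracies. If some $\beta^{(k)}=0$, the subproblem maximizer is a point mass with possibly infinite $D$. If $D(\pi^{(k)})=0$, the next ratio is $0/0$. Also, the root must be nondegenerate. I would handle these by assuming $r$ is bounded, so that $Z_\beta$ is finite and smooth for $\beta>0$, and by initializing with $\beta^{(1)}>0$ finite, or equivalently taking $\pi^{(0)}$ to be any policy with positive numerator. Smoothness of $\beta\mapsto\pi_\beta$ also gives continuity of $F$ and $D(\pi_\beta)$, which Step 2 uses.

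With more effort one gets superlinear convergence, since $F$ is convex and $-D(\pi^{(k)})$ is a subgradient of $F$ at $\beta^{(k)}$, so this is Newton's method on $F$. That rate is not needed for the statement.
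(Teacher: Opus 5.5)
\paragraph{Where the proposal matches the paper.} Your overall route is the paper's. The paper gives no separate argument; it appeals to Dinkelbach's analysis, summarized in Lemma~\ref{lem:dink}, and your $F$ is the paper's $M$. Your Steps 1--2 correctly reconstruct that argument, provided $\mu_{\rf}<0$ and the iteration starts at a feasible ratio.

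\paragraph{The boundary case $\mu_{\rf}=0$.} This case is allowed by Assumption~\ref{ass:pos}, and it is where your argument fails. Your step ``$F(\beta)\to\mu_{\rf}\le 0$, so by continuity there is a root'' yields no root when the limit is $0$, so ``take the smallest root'' cannot be carried out. By Jensen,
$F(\beta)=\E_x\bigl[\beta\log\E_{y\sim\pi_{\rf}(\cdot|x)}e^{r(x,y)/\beta}\bigr]\ge\E_x\E_{y\sim\pi_{\rf}(\cdot|x)}[r(x,y)]=\mu_{\rf}=0$.
The inequality is strict for every finite $\beta>0$ unless $r(x,\cdot)$ is $\pi_{\rf}(\cdot|x)$-a.s.\ constant for $\cD$-a.e.\ $x$.
\begin{itemize}
\item In the nondegenerate case, $N(\pi_\beta)/D(\pi_\beta)>\beta$ for every $\beta$, so the supremum in \eqref{eq:frac} is $+\infty$ and is not attained. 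Your identity $F(\beta^{(k)})=D(\pi^{(k)})(\beta^{(k+1)}-\beta^{(k)})$ and your limit argument then show the iterates increase to $+\infty$, since a finite limit would be a root.
\item In the degenerate case, $F\equiv 0$ and $\pi_\beta=\pi_{\rf}$ for all $\beta$. Hence $D(\pi_{\beta^\star})=0$, which is exactly the property you planned to establish.
\end{itemize}
So the conclusion is false at $\mu_{\rf}=0$. The fix is to assume $\mu_{\rf}<0$ strictly, not to rescue the boundary case. The paper's own bound $\beta^\star\le\sigma^2/(2|\mu_{\rf}|)$ in Lemma~\ref{lem:high} is likewise vacuous there.

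\paragraph{Initialization.} Starting from an arbitrary finite $\beta^{(1)}>0$ is \emph{not} equivalent to starting from a feasible ratio $N(\pi^{(0)})/D(\pi^{(0)})$ with $N(\pi^{(0)})>0$.
\begin{itemize}
\item If $\beta^{(1)}>\beta^\star$ is large, then $N(\pi^{(1)})$ is close to $\mu_{\rf}<0$, so $\beta^{(2)}<0$.
\item The next subproblem, $\max_\pi\{N(\pi)+|\beta^{(2)}|D(\pi)\}$, is a convex maximization whose solution is not a tilt. Outside finite $\cY$ it can be unbounded.
\item The lower bound $\beta^{(k)}\ge\beta^{(1)}>0$, which you use to keep $D(\pi^{(k)})$ bounded, is also lost.
\end{itemize}
Only Dinkelbach's feasible-ratio start keeps every iterate in $(0,\beta^\star]$. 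With that start and $\mu_{\rf}<0$, your argument goes through.
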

Notice that $\pi_{\beta^\star}$ is precisely the optimizer of the RL fine-tuning objective in Equation~\eqref{eq:klrl} with regularization coefficient $\beta^\star$. In other words, the optimal policy for the agent is the reference policy $\pi_\rf$ tilted by $\beta^\star$ as in \eqref{eq:tilt}.

\paragraph{Discussion.} 
Taken together, the results in this section imply that the agent playing policy $\pi_{\beta^\star}$ and the monitor testing for policy $\pi_{\beta^\star}$ is a Nash equilibrium in the sequential detection game.  
The equilibrium characterization yields a game-theoretic derivation for KL-regularized RL, and provides an operational interpretation of the optimal regularization coefficient as the one which maximizes reward per unit of statistical distinguishability from the reference policy. 
In this sense, the KL penalty is not just a convenient proxy for behavioral preservation, but the ``shadow price'' of remaining difficult to detect under sequential monitoring. 

This equilibrium characterization is also robust to tie-breaking. 
In Appendix~\ref{app:tie}, we show that while $\pi_{\beta^\star}$ is not a unique agent best-response, the monitor's SPRT with $\cH_1 : \pi = \pi_{\beta^\star}$ is still an $\alpha$-correct, power-one test for \emph{any} agent best response. 
Therefore, the monitor's test remains valid even if the agent breaks ties in an unknown or arbitrary way. 
\section{Reduction to KL-Regularized RL}\label{sec:stoch}
We now turn our attention from characterizing the equilibrium policy $\pi_{\beta^\star}$ to learning it. 
Traditional fractional programming approaches cannot be applied off-the-shelf, as they rely on the ability to evaluate quantities like $\E_{y \sim \pi(\cdot | x)}[r(x,y)]$ and $\E[\kl(\pi(\cdot | x), \pi_{\rf}(\cdot | x))]$ exactly for an arbitrary policy $\pi$, which is generally not feasible when $\pi$ is an LLM. 
Instead, we show that $\beta^\star$ can be characterized as the unique root of a one-dimensional function and estimate it from data using a stochastic bisection procedure, in the spirit of stochastic root-finding methods \citep{robbins1951stochastic}.

Consider the function $M(\beta) := \E_{y \sim \pi_{\beta}(\cdot | x)}[r(x, y)] - \beta \cdot \E[\kl(\pi_{\beta}(\cdot | x), \pi_{\rf}(\cdot | x))].$
%
%
Intuitively, $M(\beta)$ measures whether the reward that is obtained by deviating from the reference policy exceeds the statistical cost imposed by the KL penalty for the given $\beta$. 
When $\beta$ is too small, the reward dominates and $M(\beta)>0$.
When $\beta$ is too large, the KL penalty dominates and $M(\beta)<0$.
The equilibrium coefficient $\beta^\star$ is precisely the point where these forces balance.
Therefore, computing $\beta^\star$ reduces to finding the unique root of $M(\beta)$. This justification follows from an analysis of \citet{dinkelbach1967nonlinear} (see Lemma~\ref{lem:dink}). 

Since $M(\beta)$ is not available in closed form, we use $\widehat{\E}$ to denote the sample average and instead consider the empirical estimate of $M(\beta)$ using $n \geq 1$ samples: 
\begin{equation*}
    \widehat{M}_n(\beta) := \widehat{\E}_{y \sim \pi_{\beta}(\cdot | x)}[r(x, y)] - \beta \cdot \widehat{\E}[\kl(\pi_{\beta}(\cdot | x), \pi_{\rf}(\cdot | x))].
\end{equation*}
Our algorithm proceeds as follows: 
Set $\beta_{\lo} = 0$. 
Given an upper bound $\beta_{\hi} \geq \beta^\star$, set $\widehat{\beta} := 0.5(\beta_{\hi} + \beta_{\lo})$ and collect enough samples $n$ such that $\widehat{M}_n(\widehat{\beta}) \pm \rad(\widehat{\beta}, n, \delta)$ is bounded away from zero, where $\rad(\widehat{\beta}, n, \delta)$ is a confidence radius that holds with probability $1-\delta$. 
Depending on the sign of $\widehat{M}_n(\widehat{\beta})$, set either $\beta_{\hi} \leftarrow \widehat{\beta}$ or $\beta_{\lo} \leftarrow \widehat{\beta}$, and repeat this process until $|\beta_{\hi} - \beta_{\lo}|$ is within the desired precision $\epsilon$. 
The full procedure is outlined in Algorithm~\ref{alg:bisect}. 
\begin{algorithm}[t]
\SetAlgoLined
\DontPrintSemicolon
\textbf{Input:} $\epsilon > 0$, $\delta \in (0, 1)$, $\beta_{\hi} \in \mathbb{R}_{+}$\\
Set $\beta_{\lo} = 0$\\ 

\While{$\beta_{\hi} - \beta_{\lo} > \epsilon$}{
    $\widehat{\beta} = \frac{1}{2}(\beta_{\hi} + \beta_{\lo})$, $n = 100$\;
    
    $\pi_{\widehat{\beta}} \leftarrow \textup{\texttt{RL-Oracle}}(\pi_{\rf}, \widehat{\beta})$\;
    
    \While{$\mathrm{True}$}{
        \uIf{$\widehat{M}_n(\widehat{\beta}) + \rad(\widehat{\beta}, n, \delta) < 0$}{
            $\beta_{\hi} \leftarrow \widehat{\beta}$; break\; 
        }\uElseIf{$\widehat{M}_n(\widehat{\beta}) - \rad(\widehat{\beta}, n, \delta) > 0$}{
            $\beta_{\lo} \leftarrow \widehat{\beta}$; break\;
        }
        \Else{
            $n \leftarrow 2 n$\\
        }
    }
}
\Return $\textup{\texttt{RL-Oracle}}(\pi_{\rf}, \beta_{\lo})$
\caption{Stochastic bisection method}\label{alg:bisect}
\end{algorithm}
\begin{restatable}{theorem}{thmstoch}\label{thm:stoch}
    If $\beta_{\hi} \geq \beta^\star$ and for every $\beta$ queried by Algorithm~\ref{alg:bisect}, 
    \begin{equation*}
        \mathbb{P}\left(\exists \; n \geq 1 : \left|\widehat M_n(\beta)-M(\beta)\right| > \rad (\beta,n,\delta) \right) \leq \delta,
    \end{equation*}
    then with probability at least $1 - \delta \log_2\left(\beta_{\hi} / \epsilon\right)$, Algorithm~\ref{alg:bisect} returns a policy $\pi_{\beta_{\lo}}$ satisfying $0 \leq \beta^\star - \beta_{\lo} \leq \epsilon$ in $\log_2 \left(\beta_{\hi} / \epsilon\right)$ bisection iterations.
\end{restatable}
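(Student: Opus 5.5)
The proof has three parts: establish the structure of $M$ via the Dinkelbach analysis (Lemma~\ref{lem:dink}), maintain a bracketing invariant on a good event, and count the bisection iterations.

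\textbf{Structure of $M$.} I would first show that $M$ is continuous and strictly decreasing on $(0,\infty)$, with unique root $\beta^\star$, so that $\operatorname{sign} M(\beta) = \operatorname{sign}(\beta^\star - \beta)$.
\begin{itemize}
\item $M(\beta) = \max_\pi \{\E_\pi[r] - \beta\,\E[\kl(\pi,\pi_\rf)]\}$ is a pointwise maximum of functions affine in $\beta$, hence convex and nonincreasing.
\item It is strictly decreasing wherever the maximizer satisfies $\E[\kl(\pi_\beta,\pi_\rf)]>0$. This holds for all $\beta$: if $\pi_\beta=\pi_\rf$, then $M(\beta)=\mu_\rf\le 0$. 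But Assumption~\ref{ass:pos} gives some $\pi$ with positive reward, and the tilt structure forces $\pi_\beta\neq\pi_\rf$ unless $r$ is $\pi_\rf$-a.s. constant in $y$.
\item By Dinkelbach's argument, $M(\beta)=0$ iff $\beta$ equals the optimal ratio in \eqref{eq:frac}: $M(\beta)\ge 0$ iff some $\pi$ has $\E_\pi[r]/\E[\kl]\ge\beta$.
\item $M(\beta)>0$ for small $\beta$ by Assumption~\ref{ass:pos}, and $M(\beta)<0$ for large $\beta$ by the strict decrease above.
\end{itemize}
Hence $M>0$ on $[0,\beta^\star)$ and $M<0$ on $(\beta^\star,\infty)$.

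\textbf{Good event and invariant.} Let $E$ be the event that, for each of the at most $K=\lceil\log_2(\beta_\hi/\epsilon)\rceil$ queried midpoints $\widehat\beta$, the anytime bound $|\widehat M_n(\widehat\beta)-M(\widehat\beta)|\le\rad(\widehat\beta,n,\delta)$ holds simultaneously for all $n$. A union bound over the queried points gives $\PP(E)\ge 1-\delta\log_2(\beta_\hi/\epsilon)$.

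One subtlety: the queried $\widehat\beta$ are data-dependent. I would handle this by conditioning on the past, noting that the hypothesis holds for every $\beta$ queried and is applied to fresh samples. Equivalently, condition on the first $k-1$ outcomes, so that the $k$-th failure probability is at most $\delta$.

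On $E$, I would prove by induction the invariant $\beta_\lo\le\beta^\star\le\beta_\hi$. It holds initially, since $\beta_\lo=0$ and $\beta_\hi\ge\beta^\star$ by hypothesis. At a midpoint $\widehat\beta$:
\begin{itemize}
\item If the test exits with $\widehat M_n+\rad<0$, then $M(\widehat\beta)<0$, so $\widehat\beta>\beta^\star$ and setting $\beta_\hi\leftarrow\widehat\beta$ preserves the invariant.
\item Symmetrically, if the test exits with $\widehat M_n-\rad>0$, then $\widehat\beta<\beta^\star$ and setting $\beta_\lo\leftarrow\widehat\beta$ preserves it.
\end{itemize}

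\textbf{Iteration count and conclusion.} Each iteration halves $\beta_\hi-\beta_\lo$, starting from $\beta_\hi$. The loop therefore ends after $\lceil\log_2(\beta_\hi/\epsilon)\rceil$ iterations with $\beta_\hi-\beta_\lo\le\epsilon$. The invariant then gives $0\le\beta^\star-\beta_\lo\le\beta_\hi-\beta_\lo\le\epsilon$, and the algorithm returns $\pi_{\beta_\lo}$.

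\textbf{Main obstacle.} The step I expect to be delicate is termination of the inner doubling loop. If $\widehat\beta=\beta^\star$ exactly, then $M(\widehat\beta)=0$ and neither test can ever fire. I would argue this happens with probability zero or can be ignored (the statement counts bisection iterations, not samples). Otherwise, assuming $\rad(\beta,n,\delta)\to 0$ as $n\to\infty$, on $E$ the interval $\widehat M_n\pm\rad$ eventually excludes $0$ because $M(\widehat\beta)\ne 0$. This requires stating the mild assumption that $\rad\to 0$, or proving termination almost surely.
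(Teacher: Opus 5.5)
Your proposal is correct and follows the paper's argument: on the clean event, the sign pattern of $M$ from Lemma~\ref{lem:dink} preserves $\beta_{\lo}\le\beta^\star\le\beta_{\hi}$, each iteration halves the bracket, and a union bound over the iterations gives the probability; your conditioning on the history (to handle data-dependent midpoints) and your remark that the inner loop needs $\rad\to 0$ are improvements on the paper's sketch. Two corrections: first, strict decrease alone does not make $M$ negative for large $\beta$ (when $\mu_{\rf}=0$, $M$ is strictly decreasing yet positive everywhere), so obtain $M<0$ on $(\beta^\star,\infty)$ from strict decrease together with $M(\beta^\star)=0$ for the finite $\beta^\star\le\beta_{\hi}$, i.e., from Lemma~\ref{lem:dink}; second, the case $\widehat\beta=\beta^\star$ is neither probability zero nor ignorable, because on the clean event the midpoints are deterministic dyadic points of $[0,\beta_{\hi}]$ (e.g., $\beta_{\hi}=2\beta^\star$ makes the first inner loop run forever), so this is a real defect of the statement, silently shared by the paper's proof, that needs an extra hypothesis or a randomly perturbed $\beta_{\hi}$.
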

%
    
%
We instantiate Algorithm~\ref{alg:bisect} with specific choices of $(\beta_{\hi}, \rad(\cdot, \cdot, \cdot))$ in the Appendix (Corollaries~\ref{cor:inst1} and~\ref{cor:inst2}). 
In either instantiation, Algorithm~\ref{alg:bisect} needs $n = O(M(\widehat{\beta})^{-2})$ samples in each iteration in order to determine the sign of $M(\widehat{\beta})$ with high probability. 
Therefore, as $\widehat{\beta} \rightarrow \beta^\star$ and $|M(\widehat{\beta})| \rightarrow 0$, we should expect the number of samples we need in each iteration of stochastic bisection to increase.
As a practical relaxation, we forego the confidence sequence in our experiments and bisect based on the sign of $\widehat{M}_n(\widehat{\beta})$ after observing a sufficiently large number of samples (e.g., $n=4096$).\looseness-1 

Algorithm~\ref{alg:bisect} posits access to an ``RL oracle'' which, given a reference policy $\pi_{\rf}$ and regularization parameter $\beta$, returns the KL-regularized tilt $\pi_{\beta}$. 
While exact, polynomial-time oracles generally do not exist due to the difficulty of computing the partition function, there is a rich literature of approximate oracles that have been designed to solve the KL-regularized RL problem (e.g., PPO, GRPO, and variants thereof). 
Consequently, our theoretical guarantees for Algorithm~\ref{alg:bisect} should be interpreted as characterizing the complexity of identifying the equilibrium coefficient, assuming that each RL subproblem can be solved sufficiently accurately. 

\paragraph{Warm-start version of Algorithm~\ref{alg:bisect}.} 
It is possible to warm start Algorithm~\ref{alg:bisect}
given an arbitrary setting of $\beta_{\hi}$ (i.e., one that is not guaranteed to satisfy $\beta_{\hi} > \beta^\star$) as follows: 
Compute $\widehat{M}_n(\beta_{\hi}) + \rad(\beta_{\hi}, n, \delta)$. 
If it is bounded below zero, run Algorithm~\ref{alg:bisect} as normal. 
Otherwise set $\beta_{\lo} \leftarrow \beta_{\hi}$, $\beta_{\hi} \leftarrow 2 \beta_{\hi}$, and repeat. 
This procedure will produce an $\varepsilon$-approximate solution with high probability.  
Furthermore, it makes $\log_2(\beta_{\hi}/\epsilon)$ calls to the RL oracle if $\beta_{\hi}$ is a valid upper bound on $\beta^\star$. 
If it is not, the number of oracle calls is still $O \left(\log_2 \left(\sigma^2/(|\mu_{\rf}|\epsilon) \right)\right)$ if rewards are $\sigma$-sub-Gaussian by Lemma~\ref{lem:high}.

Finally, it is worth noting that in our game formulation, even if the agent and the monitor run separate copies of Algorithm~\ref{alg:bisect}, the agent and monitor utilities will still be close to their equilibrium values. 
See Appendix~\ref{app:app} for more details. 
\section{Experiments}\label{sec:expts}
We evaluate our methods empirically in two settings: a continual learning task and a model auditing task. 
In the continual learning task, the goal is to improve performance on a new target reward while preserving existing behavioral properties of the reference policy. 
In the model auditing task, an auditor tries to detect whether a model has been strategically fine-tuned from its outputs. 
Taken together, these experiments test the two main operational interpretations of our framework: stochastic bisection as a replacement for manual KL coefficient search, and the equilibrium likelihood-ratio test as a practical auditing rule.

\paragraph{Experiment setup.}
The two sets of experiments use the same setup. 
In each round, the language model is given the same prompt (``Once upon a time'') and generates a completion, with the agent receiving reward equal to the number of characters in the response. 

Before bisection, the reward is calibrated by sampling completions from the reference policy and subtracting off the raw mean reward, plus a fixed margin $\rho$. 
Under the reference policy, the calibrated reward therefore has mean approximately equal to $-\rho$. 
The calibrated reward is used throughout KL-constrained fine-tuning and bisection; because adding a constant to the reward does not change the optimal policy at any fixed $\beta$, this shift does not alter the fine-tuning objective. 
Evaluation metrics on Pareto plots are reported in the original, uncalibrated units.\looseness-1

\subsection{Continual Learning}\label{sec:continual}
\begin{figure}[t]
    \centering
    \includegraphics[width=\linewidth]{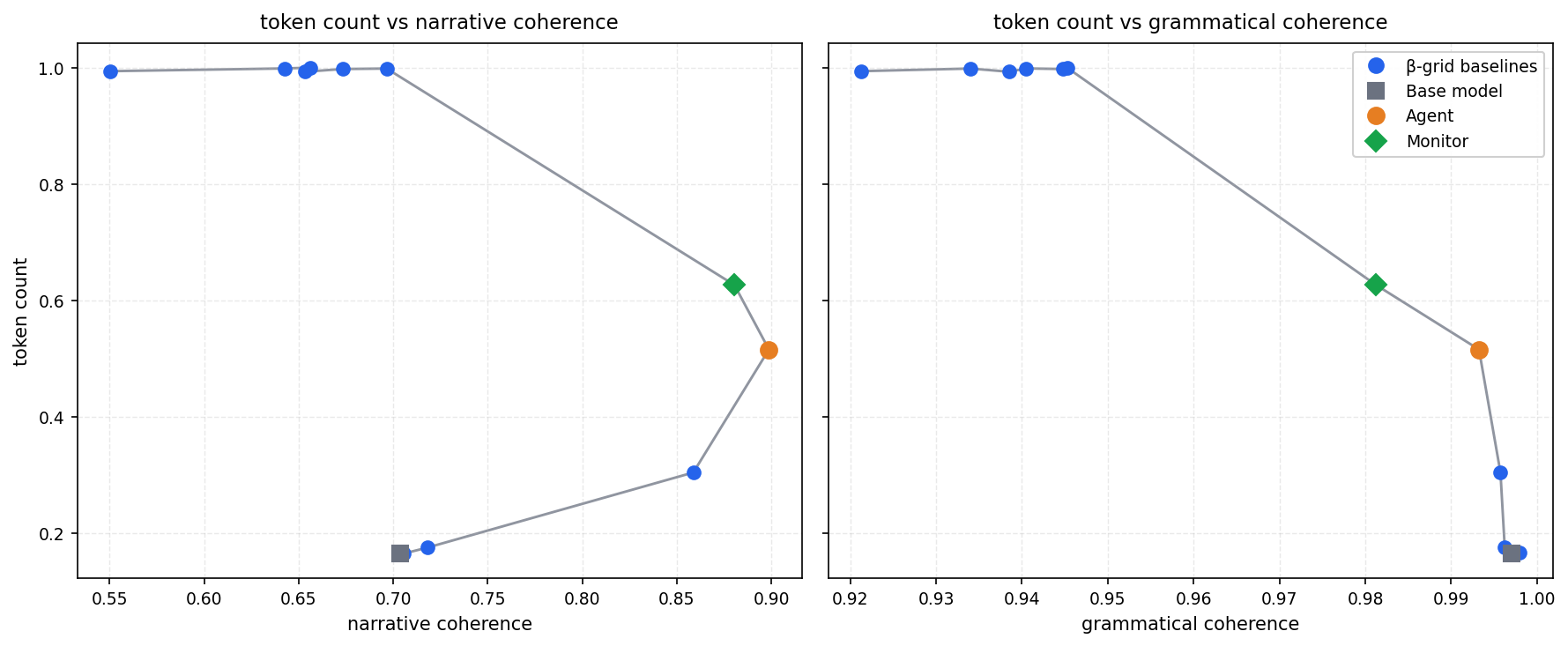}
    \caption{Reward--retention trade-offs for $\qwen$ with shift $\rho = 0.1$. The vertical axis reports response length in tokens, and the horizontal axes report narrative and grammatical coherence, as judged by $\gptfivemini$ \citep{singh2025openai}. Blue circles show policies trained on the compute-matched exponential grid over $\beta$, the gray square denotes the reference policy; and the orange circle and green diamond denote two independent stochastic-bisection runs. Each point is the average of $400$ independent responses. Higher values are better on both axes.\looseness-1}
    \label{fig:qwen}
\end{figure}
\begin{figure}[t]
     \centering
     \begin{subfigure}[b]{0.49\textwidth}
         \centering
         \includegraphics[width=\textwidth]{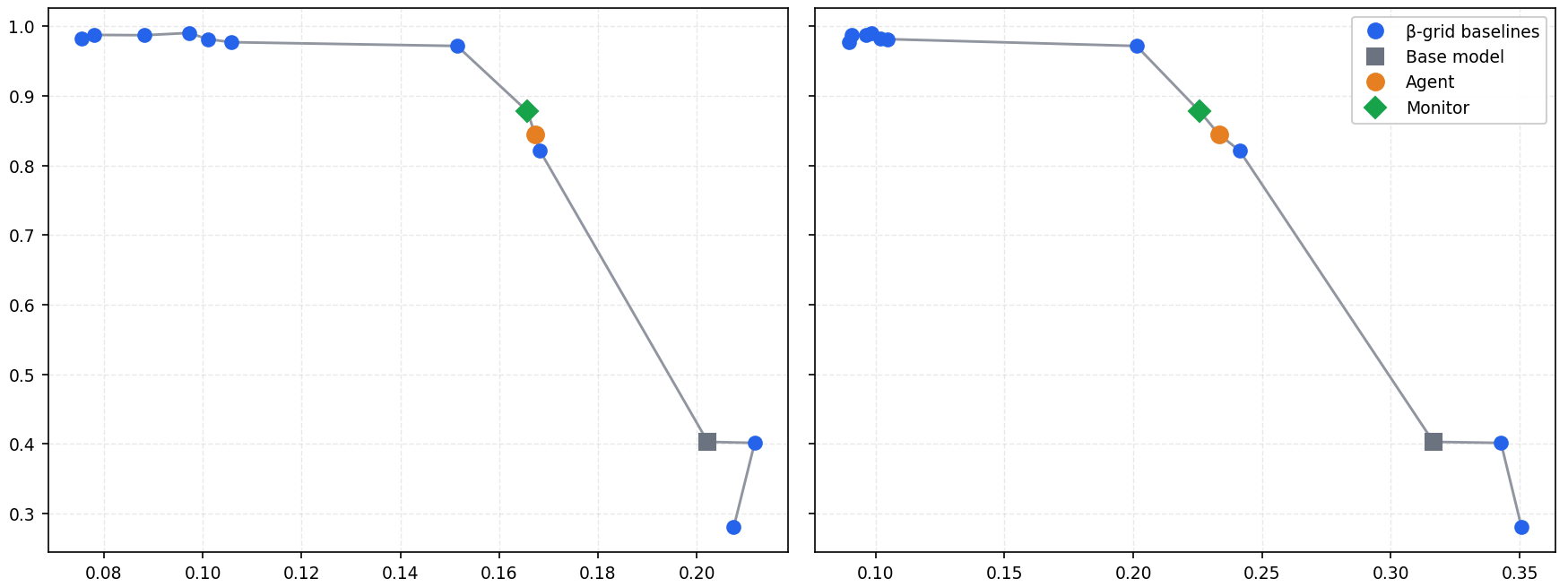}
         \caption{\llama with shift $\rho = 0.1$.}
         \label{fig:llama0p1}
     \end{subfigure}
     \hfill
     \begin{subfigure}[b]{0.49\textwidth}
         \centering
         \includegraphics[width=\textwidth]{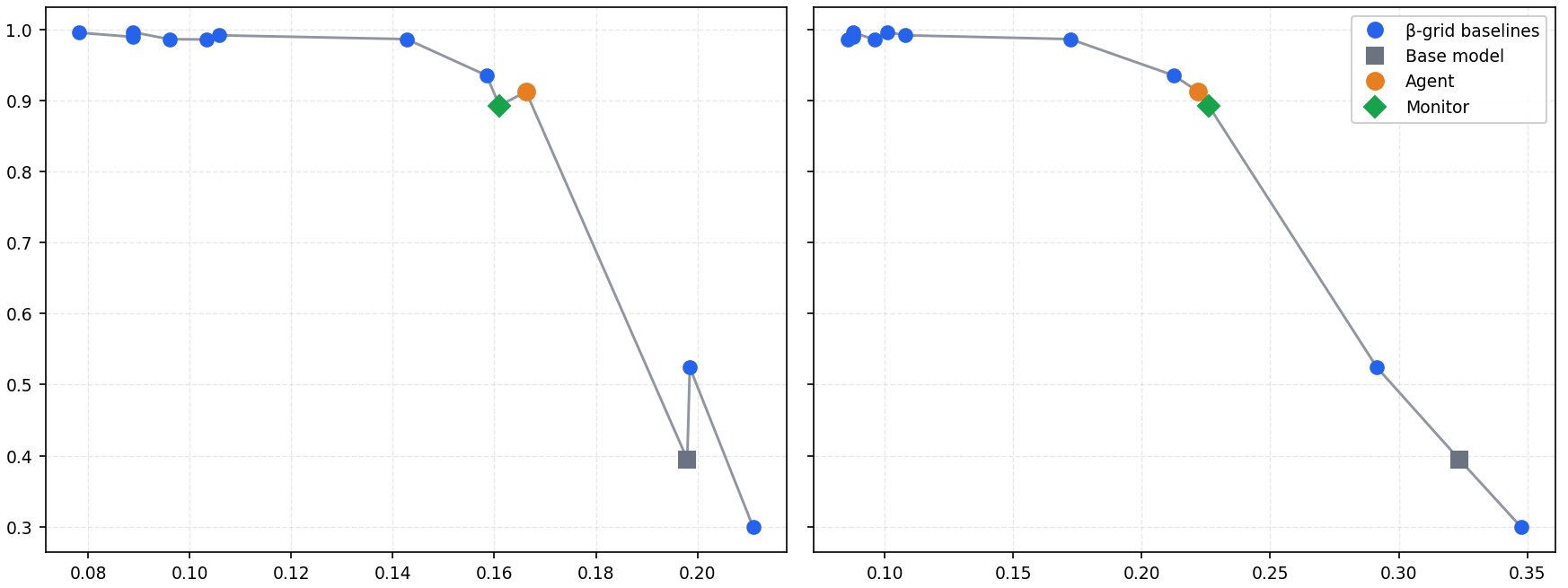}
         \caption{\llama with shift $\rho = 0.2$.}
         \label{fig:llama0p2}
     \end{subfigure}
        \caption{Reward--retention trade-offs for \llama with shift $\rho = 0.1$ (left) and $\rho = 0.2$ (right). Plotting conventions, axis labels, and plot titles are the same as in Figure~\ref{fig:qwen}. In both settings, Algorithm~\ref{alg:bisect} selects policies near the elbow of the empirical trade-off curve traced out by the compute-matched $\beta$ grid.\looseness-1}
        \label{fig:llama}
\end{figure}
In our continual learning setup, the response-length reward function plays the role of the skill to be improved, while the reference policy's ability to produce coherent responses is the behavior to be retained. 
Although this is a single model update rather than a multi-task setting, it isolates the central continual learning trade-off: 
aggressively optimizing a new objective can alter or degrade useful behaviors inherited from the reference policy.\footnote{This is commonly observed in the literature on RLHF and is often referred to as the ``alignment tax''~\citep[e.g.,][]{ouyang2022training,lin2024mitigating,zhang2024cppo,jang2024degeneration}.}
For each trained policy, we measure the target behavior using token count and retention using both narrative and grammatical coherence.\footnote{We used \gptfivemini and \gptfivenano as judges to evaluate narrative and grammatical coherence~\citep{zheng2023judging}. All main body plots use \gptfivemini. Results using \gptfivenano are very similar and are in Appendix~\ref{app:expts}.\looseness-1}\looseness-1 

We compare Algorithm~\ref{alg:bisect} against the standard practice of manually sweeping over the regularization coefficient. 
This baseline trains one policy for each value in an exponentially spaced grid over $\beta$, using the same number of fine-tuning runs as Algorithm~\ref{alg:bisect}. 
After inspecting the results, a practitioner could select whichever grid point best matches their desired balance between length and coherence.

Our results are summarized in Figures~\ref{fig:qwen} and~\ref{fig:llama}. 
Higher values are desirable on both axes, so policies toward the upper-right corner of each panel achieve more favorable reward--retention trade-offs.
The points labeled \texttt{Agent} and \texttt{Monitor} are produced by two independent runs of Algorithm~\ref{alg:bisect}. 
We retain these labels for consistency with the auditing experiment in Section~\ref{sec:auditing}; in this experiment, the two points illustrate the variation induced by GRPO RL oracle calls. 
Blue circles correspond to points on an exponential grid between $0$ and $\beta_{\hi}$, and the reference policy is denoted by a gray square. 

Figure~\ref{fig:qwen} shows the results for \qwen with shift $\rho = 0.1$. 
Policies that generate the longest responses exhibit lower coherence, while policies that preserve the highest narrative and grammatical coherence produce shorter responses. 
Interestingly, narrative coherence actually increases with token count up to a point, before decreasing once the token count gets too large. 
Each run of Algorithm~\ref{alg:bisect} selects an intermediate policy near the bends of the empirical trade-off curves, in contrast to the compute-matched grid, which results in models closer to either extreme. 

Figure~\ref{fig:llama} shows the corresponding results for $\llama$ with shifts $\rho = 0.1$ and $\rho = 0.2$. 
The same qualitative pattern holds in both settings:
the bisection policies lie near the ``elbow'' of the curve traced out by the grid, avoiding both the high-retention regime in which little progress is made on the length objective, and the high-length regime in which coherence falls sharply.\footnote{\llama scores considerably lower than \qwen in terms of narrative and grammatical coherence for all values of $\beta$, and does not exhibit the same improvement in narrative coherence as token count increases. This is likely due to it being a much smaller model.}\looseness-1  

\paragraph{Takeaways.} Across all settings, we find that Algorithm~\ref{alg:bisect} selects a single policy near the elbow of the empirical reward--retention frontier.
In some instances (e.g., Figure~\ref{fig:qwen}, left), Algorithm~\ref{alg:bisect} finds desirable regions of the frontier that are not reached by grid search. 
However even when Algorithm~\ref{alg:bisect} and grid search find similar parts of the frontier (e.g., Figure~\ref{fig:llama0p2}), grid search requires an additional post-hoc selection step, unlike Algorithm~\ref{alg:bisect} which resolves this trade-off by adaptively concentrating its fine-tuning runs around the equilibrium coefficient.

\subsection{Model Auditing}\label{sec:auditing}
\begin{table}[t]
    \centering
    \small
    \setlength{\tabcolsep}{5pt}
    \caption{
    Model-auditing results at level $\alpha=0.05$. $\pi_A$ SPRT uses the true deployed policy and is thus an oracle
    comparator; $\pi_M$ SPRT uses the monitor's independently trained equilibrium policy.
    Lower stopping times and false positive rates are better. Averages are taken over $100$ samples.\looseness-1 
    }
    \label{tab:auditing}
    \begin{tabular}{llccc}
        \toprule
        Model and margin
        & Metric
        & \shortstack{$\pi_A$ SPRT}
        & \shortstack{$\pi_M$ SPRT}
        & \shortstack{Grid\\mixture} \\
        \midrule
        \qwen, $\rho=0.1$
        & Avg. stopping time & 1.00 & 1.08 & 1.02 \\
        & False positive rate & 0.00 & 0.00 & 0.09 \\
        \midrule
        \llama, $\rho=0.01$
        & Avg. stopping time & 2.49 & 6.61 & 20.36 \\
        & False positive rate & 0.04 & 0.06 & 0.20 \\
        \midrule
        \llama, $\rho=0.1$
        & Avg. stopping time & 1.89 & 5.31 & 9.47 \\
        & False positive rate & 0.03 & 0.01 & 0.15 \\
        \midrule
        \llama, $\rho=0.2$
        & Avg. stopping time & 1.75 & 4.40 & 6.23 \\
        & False positive rate & 0.04 & 0.03 & 0.09 \\
        \bottomrule
    \end{tabular}
\end{table}





We now turn our attention to the auditing interpretation of the sequential detection game. 
Consider a model-hosting platform that sets inference costs such that they are profitable on a per-token basis. 
Such a platform has an economic incentive to fine-tune its models to produce unnecessarily long responses, while remaining difficult to distinguish from the advertised model~\citep{velasco2026test,cao2026pay}.\looseness-1 

A third-party auditor (e.g., a watchdog organization) can easily observe the model's outputs, but verifying the parameters of the deployed model is significantly more difficult. 
Nevertheless, the auditor may want to detect this strategic behavior when it exists, while controlling the probability of falsely flagging the model when it does not.
Our theory precisely characterizes the equilibrium of this game, and gives the auditor a way to empirically test for this type of bad behavior.\looseness-1 

Starting from the same reference policy and using the token length reward described above, the agent and monitor independently run Algorithm~\ref{alg:bisect} with the same RL oracle budget. 
Let $\pi_{A}$ denote the policy obtained by the agent and let $\pi_{M}$ denote the policy obtained (independently) by the monitor/auditor. 
The agent deploys $\pi_{\mathrm{A}}$, whereas the monitor uses $\pi_{\mathrm{M}}$ as the alternative hypothesis in its sequential test. 
In other words, the auditor knows the reference policy and the incentive being optimized, but they do not know the agent's realized policy or share its fine-tuning randomness. 

Each auditing trial uses a source policy $\pi_{\src} \in \{\pi_{A},\pi_{\rf}\}$. 
Sampling from $\pi_{A}$ represents the presence of strategic fine-tuning, whereas $\pi_{\rf}$ represents honest deployment. 
At time $t$, a completion $y_t \sim \pi_{\src}(\cdot | x)$ is generated and scored under both $\pi_M$ and $\pi_{\rf}$. 
The monitor uses the SPRT which rejects if $L_t > \log(1/\alpha)$ for any $t \geq 1$, where $L_t := \sum_{s=1}^{t} \log(\pi_M(y_s | x)/\pi_{\rf}(y_s | x))$ and the log likelihood is computed by summing the token-level log probabilities along the realized completion.\looseness-1 

We compare the equilibrium test against a uniform mixture likelihood ratio test (described in Appendix~\ref{app:expts}). 
We also compare against the ``skyline'' SPRT which uses the agent's actual deployed policy as its alternative. 
While the agent's policy is not available to the auditor, it gives a lower bound on the detection speed attainable by any SPRT.
Finally, all tests are evaluated on the same sequence of completions from $\pi_{\src}$; the candidate policies are used only to score these observations.\looseness-1

\paragraph{Takeaways.} Table~\ref{tab:auditing} summarizes the results. 
For \qwen, the strategic modification is sufficiently pronounced that every test detects it after approximately one observation. 
The agent SPRT stops after exactly one completion on average, while the independently trained monitor stops after $1.08$ completions. 
The uniform mixture test obtains a slightly better stopping time of $1.02$, albeit at the cost of a false positive rate that is higher than the $0.05$ threshold. 
The differences between testing strategies are clearer for \llama. 
As expected, the oracle agent SPRT detects the deployed policy fastest. Nevertheless, the monitor test is significantly faster than the uniform mixture test at every calibration margin.
The composite grid tests also reject more frequently under the reference policy, with empirical false-positive rates between $0.08$ and $0.2$, compared to the monitor's false positive rate which is between $0.01$ and $0.06$.
\section{Related Work}\label{sec:related}

\paragraph{Reward–KL trade-offs in RL fine-tuning.}
KL regularization is a central component of modern LLM fine-tuning, particularly in reinforcement learning from human feedback, where it is used to improve a target reward while limiting drift from a reference model. 
Several works study the trade-off between reward and KL regularization empirically \citep[see, e.g.,][]{bai2022training, gao2023scaling}. 
\citet{korbak2022rl} provide a complementary theoretical interpretation of KL-regularized RL through the lens of variational inference, where the reference model is analogous to a prior and the reward function provides evidence. 

\paragraph{Game-theoretic applications of hypothesis testing.}
A small but growing line of work uses statistical hypothesis testing procedures as strategic instruments in games. 
\citet{gauthier2026betting} use sequential tests to monitor whether agents in a multi-agent system are behaving consistently with equilibrium play. 
\citet{capitaine2026test} use sequential testing in infinitely repeated games, where they show a ``test-then-punish strategy'' can sustain cooperation by detecting deviations and triggering punishment. 
%
%
\citet{hu2024game} are closer in spirit, as they also study game-theoretic hypothesis testing against strategic evasion. 
They model an attacker who manipulates observations to evade a Neyman--Pearson test and characterize equilibrium distortion strategies and detector responses in Stackelberg and signaling games. 
Our settings differ in both the object being manipulated and the resulting equilibrium structure. 
%

\paragraph{Continual learning.}
Our experiments in Section~\ref{sec:continual} are instances of continual learning, where the goal is to learn new tasks while avoiding degradation on previously learned tasks. 
%
%
%
%
\citet{sun2019lamol} uses LLMs to generate pseudo-samples of previous tasks for training alongside data for the new task.
\citet{razdaibiedina2023progressive} avoid modifying the reference policy by learning task-specific ``soft prompts'' and concatenating newly learned prompts with earlier ones. 
\citet{zhang2024cppo} study continual learning in RLHF and use sample-wise ``balance weights'' to regulate the trade-off between policy learning and knowledge retention. 
Like us, \citet{zhang2023copf} explicitly consider policies that are tilted from a previous task's policy, but they treat the KL coefficient as an externally chosen hyperparameter. 
See~\citet{wu2024continual} and~\cite{shi2025continual} for surveys on continual learning for LLMs. 

\paragraph{AI auditing.}
Our experiments in Section~\ref{sec:auditing} are related to work on language model auditing, where the goal is to provide independent assurance that LLMs operate safely, ethically, and in compliance with legal or organizational standards~\citep{mokander2024auditing}. 
Several recent works formalize auditing as a statistical testing problem over model outputs. 
\citet{gao2025model} introduce model equality testing, which asks whether a black-box API is serving the same model as a claimed reference model. 
\citet{richter2025auditing} frame auditing for behavioral shifts in LLMs as a sequential testing problem. 
Our auditing setup is aligned with this statistical view, but differs in that the audited model is itself trained strategically to avoid detection.

\paragraph{Strategic learning.}
More broadly, our work belongs to the literature on strategic machine learning, where the deployment of a learning algorithm influences the behavior of other agents.
Strategic classification studies settings in which individuals manipulate their features to receive favorable decisions~\citep{hardt2016strategic}, while performative prediction~\citep{perdomo2020performative} analyzes learning problems where the underlying data distribution changes in response to the deployed model. 
Subsequent work extends these ideas to multi-step settings, where strategic responses can accumulate over time~\citep{harris2021stateful, brown2022performative}, like in our setting.
\section{Conclusions and Future Research}\label{sec:conc}
%
We have introduced the sequential detection game, a game-theoretic framework for RL fine-tuning in which an agent seeks to maximize reward while remaining difficult to distinguish from a reference policy. 
We have shown that the Nash equilibrium of this game recovers the standard KL-regularized RL objective while determining the regularization coefficient endogenously, giving it an explicit operational interpretation rather than treating it as a tunable hyperparameter. 
We then developed a stochastic bisection algorithm for estimating the equilibrium coefficient and present preliminary empirical evidence that it yields competitive reward–retention trade-offs in a continual learning setting, while also enabling principled auditing of language model APIs.
There are several further directions that are worth pursuing in this vein.\looseness-1 

\paragraph{Unknown agent motives.} Our framework provides a principled way to do black-box auditing of model API platforms when their reward function is known to the auditor. 
While knowledge of the reward function may be reasonable in settings where the platform's incentives are public knowledge, it would be interesting to extend our equilibrium characterization to settings where the auditor is uncertain about the underlying reward function. 
%


\paragraph{Antidistillation sampling.} Another promising application of our framework is antidistillation sampling~\citep{savani2026antidistillation}, where the goal is to modify a model's next-token probabilities in order to make it harder to distill the model's capabilities. 
\citeauthor{savani2026antidistillation} do this by tilting the reference policy with a specially designed reward function that makes model distillation more difficult. 
They treat the regularization coefficient as a tunable parameter to control the trade-off between accuracy and (anti-)distillability, but it would be interesting to use our framework to automatically balance between the two.\looseness-1

\paragraph{Further empirical work} is needed to close the gap between our equilibrium analysis and large-scale RL fine-tuning in practice. 
%
%
More broadly, we hope that this work encourages the development of practical fine-tuning methods whose hyperparameters are derived from principled operational objectives, rather than chosen heuristically.

\section*{Acknowledgments} 
KH was supported in part by the Simons Institute for the Theory of Computing, and part of this work was conducted when he was visiting the Institute. 
IW-S gratefully acknowledges support from the Miller Institute for Basic Research in Science. 
PA gratefully acknowledges the support of DARPA through award No. HR00112520022.  This work was also funded by the European Union (ERC-2022-SYG-OCEAN-101071601), the NSF Institute for Foundations of Machine Learning under grant CCF-2505865, the National Science Foundation under grants CCF-2145898, the Office of Naval Research under grant N00014-24-1-2159, a Google Research Scholar Award, an Alfred P. Sloan fellowship, and a Schmidt Science AI2050 fellowship.
Views and opinions expressed are however those of the author(s) only and do not
necessarily reflect those of the European Union or the European Research Council
Executive Agency. Neither the European Union nor the granting authority can be
held responsible for them.

\bibliographystyle{plainnat}
\bibliography{refs}

\newpage
\appendix
\section{Appendix for Section~\ref{sec:seq-det-game}: The Sequential Detection Game}

\begin{lemma}[\citet{dinkelbach1967nonlinear}]\label{lem:dink}
    The following properties are true for $M(\beta)$:
    \begin{enumerate}
        \item $M(\beta^\star) = 0$
        \item $M(\beta) < 0$ for all $\beta > \beta^\star$
        \item $M(\beta) > 0$ for all $\beta < \beta^\star$
        \item $M(\beta)$ is convex.
    \end{enumerate}
\end{lemma}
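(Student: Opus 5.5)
The approach is to write $M$ as a pointwise maximum of affine functions of $\beta$ and read off all four properties from that representation. For a policy $\pi$ write $R(\pi) := \E_{y\sim\pi(\cdot|x)}[r(x,y)]$ and $K(\pi) := \E[\kl(\pi(\cdot|x),\pi_\rf(\cdot|x))] \ge 0$. Since $\pi_\beta$ maximizes the KL-regularized objective \eqref{eq:klrl}, by \eqref{eq:tilt} we have
\begin{equation*}
M(\beta) = \max_{\pi}\{R(\pi) - \beta K(\pi)\}.
\end{equation*}
Each map $\beta\mapsto R(\pi)-\beta K(\pi)$ is affine, so $M$ is a supremum of affine functions. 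This gives convexity (item 4) immediately. It also shows that $M$ is nonincreasing in $\beta$, because every slope $-K(\pi)$ is nonpositive.

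For items 1 to 3, I will take $\beta^\star := \max_\pi R(\pi)/K(\pi)$, the optimal value of \eqref{eq:frac}, and let $\pi^\star$ be a maximizer. By Assumption~\ref{ass:pos}, $\beta^\star>0$, and $\pi^\star\neq\pi_\rf$, so $K(\pi^\star)>0$. I will verify items 1 to 3 in the following order.

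\begin{enumerate}
\item \emph{$M(\beta^\star)\ge 0$.} Take $\pi=\pi^\star$, which gives $R(\pi^\star)-\beta^\star K(\pi^\star)=0$.
\item \emph{$M(\beta^\star)\le 0$.} For any $\pi$ with $K(\pi)>0$, $R(\pi)\le \beta^\star K(\pi)$ by definition of $\beta^\star$. For $\pi=\pi_\rf$, the value is $\mu_\rf\le 0$. So every term in the max is $\le 0$, and together with the previous step $M(\beta^\star)=0$.
\item \emph{$\beta<\beta^\star$.} Here $M(\beta)\ge R(\pi^\star)-\beta K(\pi^\star) = (\beta^\star-\beta)K(\pi^\star)>0$.
\item \emph{$\beta>\beta^\star$.} Let $\pi_\beta$ be the maximizer. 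If $\pi_\beta=\pi_\rf$, then $M(\beta)=\mu_\rf$.
\begin{itemize}
\item If $\mu_\rf<0$ this is negative and we are done.
\item Otherwise $K(\pi_\beta)>0$ and $M(\beta)=R(\pi_\beta)-\beta K(\pi_\beta)\le(\beta^\star-\beta)K(\pi_\beta)<0$.
\end{itemize}
\end{enumerate}

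The main obstacle is the boundary case of step 4 when $\mu_\rf=0$. Then $M(\beta)$ could equal $0$ for large $\beta$ only if the tilt collapses to $\pi_\rf$. But the tilt \eqref{eq:tilt} never equals $\pi_\rf$ unless $r(x,\cdot)$ is $\pi_\rf$-a.s.\ constant in $y$ for almost every $x$. In that degenerate case $R(\pi)=\mu_\rf\le 0$ for every $\pi$, which contradicts the first half of Assumption~\ref{ass:pos}. So $\pi_\beta\ne\pi_\rf$, hence $K(\pi_\beta)>0$, and the strict inequality holds.

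A secondary technical point is the existence of the maximizer $\pi^\star$ of \eqref{eq:frac}. If attainment fails, I would instead work with a supremum and a near-optimal sequence. Item 3 then follows by picking $\pi$ with $R(\pi)/K(\pi)>\beta$, and the remaining items are unchanged.
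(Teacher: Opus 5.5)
Your proof is correct and is essentially Dinkelbach's classical argument. Writing $M$ as a supremum of affine functions of $\beta$ gives convexity and monotonicity, and comparing each $R(\pi)-\beta K(\pi)$ with the optimal ratio gives the sign pattern; the paper does not reproduce this argument but simply cites Dinkelbach. One refinement: the case $\mu_{\rf}=0$ that you call the main obstacle can never occur with a finite $\beta^\star$, because strict Jensen (using the non-degeneracy of $r$ you already derived) gives $M(\beta)=\E[\beta\log Z_\beta(x)]>\mu_{\rf}=0$ for every $\beta>0$, hence $\sup_\pi R(\pi)/K(\pi)=+\infty$ and the lemma tacitly needs $\mu_{\rf}<0$ (consistent with the bound $\beta^\star\le\sigma^2/(2|\mu_{\rf}|)$ of Lemma~\ref{lem:high}); and if you drop attainment, the lower bound $M(\beta^\star)\ge 0$ must come from continuity of the convex function $M$ together with item 3, rather than from plugging in $\pi^\star$.
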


\begin{lemma}\label{lem:R}
    Let $\tau_{\alpha} := \inf\{\tau \; : \; \sum_{t=1}^{\tau} \log \frac{dQ}{dP}(X_t) \geq \log(1/\alpha)\}$ and assume that $Q << P$, $\E_{R}[\tau_{\alpha}] < \infty$, and $\E_{R}[\log \frac{dQ}{dP}(X)] < \infty$. 
    Then 
    \begin{equation*}
        \E_{R}[\tau_{\alpha}] \geq \frac{\log(1/\alpha)}{\E_{R}[\log \frac{dQ}{dP}(X)]}. 
    \end{equation*}
\end{lemma}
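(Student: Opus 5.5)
The plan is to apply Wald's identity to the stopped random walk $S_\tau := \sum_{t=1}^{\tau} \log \frac{dQ}{dP}(X_t)$, where $X_1, X_2, \dots$ are i.i.d.\ from $R$. Write $\ell(X) := \log \frac{dQ}{dP}(X)$ and $m := \E_R[\ell(X)]$. We first dispose of the case $m \le 0$. If $m \le 0$ and the right-hand side is read as $+\infty$ (or the bound as vacuous), then since $\E_R[\tau_\alpha] < \infty$ forces $\tau_\alpha < \infty$ almost surely, Wald gives $\E_R[S_{\tau_\alpha}] = m\,\E_R[\tau_\alpha] \le 0$. This contradicts $S_{\tau_\alpha} \ge \log(1/\alpha) > 0$ almost surely, so the hypotheses force $m > 0$ and we may assume it from now on.

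For Wald's identity I would need $\E_R|\ell(X)| < \infty$. The hypothesis only gives $\E_R[\ell] < \infty$, which I read as integrability of $\ell^+$ with the expectation well defined. If $\E_R[\ell^-] = \infty$, then $m = -\infty$ and the previous paragraph applies, via the one-sided version of Wald. Alternatively, apply Wald to the truncations $\ell \vee (-K)$: these still cross the boundary by time $\tau_\alpha$, since they dominate $\ell$, and we let $K \to \infty$ using monotone convergence on the means. So assume $\ell \in L^1(R)$.

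The key step is Wald's equation. Here $\tau_\alpha$ is a stopping time for the natural filtration of $(X_t)$, $\E_R[\tau_\alpha] < \infty$, and the increments are i.i.d.\ and integrable. Hence
\[ \E_R[S_{\tau_\alpha}] = \E_R[\tau_\alpha]\, m. \]
By definition of $\tau_\alpha$, on $\{\tau_\alpha < \infty\}$, which has full probability, we have $S_{\tau_\alpha} \ge \log(1/\alpha)$. Taking expectations gives $\E_R[\tau_\alpha]\, m \ge \log(1/\alpha)$. Dividing by $m > 0$ yields the claim.

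The argument needs no overshoot control, because overshoot only helps this inequality. The main obstacle is therefore purely technical: justifying Wald's identity under the stated weak integrability assumption, and handling the sign of $m$. I would deal with both through the truncation and contradiction argument above. The assumption $Q \ll P$ is used only to make $\ell$ well defined $R$-almost surely, interpreting $\ell = \log 0 = -\infty$ where $dQ/dP = 0$; this is again covered by the truncation.
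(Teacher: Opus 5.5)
Your proof is correct and follows the same route as the paper: Wald's identity for the stopped log-likelihood-ratio walk, the boundary-crossing bound $S_{\tau_\alpha} \ge \log(1/\alpha)$, and division by $\E_R[\log \frac{dQ}{dP}(X)]$. Your extra steps fill in details the paper's proof takes for granted: you show that the hypotheses force this mean to be strictly positive, which is what justifies the division, and you justify Wald under only $\E_R[\ell^+]<\infty$ via the truncation $\ell \vee (-K)$.
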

\begin{proof}
    By Wald's identity, 
    \begin{equation*}
        \E_{R} \left[\sum_{t=1}^{\tau_{\alpha}} \log \frac{dQ}{dP}(X_t) \right] = \E_R[\tau_{\alpha}] \cdot \E_{R}\left[\log \frac{dQ}{dP}(X) \right].
    \end{equation*}
    Our definition of $\tau_{\alpha}$ implies that 
    \begin{equation*}
        \E_R[\tau_{\alpha}] \cdot \E_{R}\left[\log \frac{dQ}{dP}(X) \right] \geq \log(1/\alpha).
    \end{equation*}
    Dividing both sides by $\E_{R}[\log \frac{dQ}{dP}(X)]$ obtains the result.
\end{proof}

\subsection{Robustness to tie-breaking}\label{app:tie}
While $\pi_{\beta^\star}$ is an optimal policy for the agent to play against the monitor's equilibrium SPRT, it is not the only one. 
\begin{restatable}{theorem}{thmbr}\label{thm:br}
    Let $d(\pi) := \E[\kl(\pi(\cdot | x), \pi_{\rf}(\cdot | x)) - \kl(\pi(\cdot | x), \pi_{\beta^\star}(\cdot | x))]$. 
    Suppose that $\tau_{\alpha} := \inf \left\{\tau \; : \; \sum_{t=1}^{\tau} \log \frac{\pi_{\beta^\star}(x_t, y_t)}{\pi_{\rf}(x_t, y_t)} \geq \log(1/\alpha) \right\}$ and the agent plays policy $\pi$ such that $d(\pi) > 0$. 
    We have that 
    \begin{equation*}
        \max_{\pi} \E_{x_t \sim \cD, y_t \sim \pi(\cdot | x_t)}\left[\sum_{t=1}^{\tau_{\alpha}} r(x_t, y_t) \right] \geq \beta^\star \log(1/\alpha).
    \end{equation*}
\end{restatable}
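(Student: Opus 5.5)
The plan is to show that \emph{every} policy $\pi$ with $d(\pi)>0$ earns expected cumulative reward at least $\beta^\star\log(1/\alpha)$ against the fixed SPRT $\tau_\alpha$. The displayed bound on the maximum then follows, and it is witnessed for instance by $\pi=\pi_{\beta^\star}$ itself, for which $d(\pi_{\beta^\star})=\E[\kl(\pi_{\beta^\star}(\cdot|x),\pi_\rf(\cdot|x))]>0$. The argument combines Wald's equation, Lemma~\ref{lem:R} applied with $R$ equal to the agent's actual policy, and an explicit computation of the log-likelihood ratio using the tilt form \eqref{eq:tilt}.

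\textbf{Step 1: the increment of the test statistic.} By \eqref{eq:tilt},
\[
\log\frac{\pi_{\beta^\star}(y|x)}{\pi_\rf(y|x)}=\frac{r(x,y)}{\beta^\star}-\log Z_{\beta^\star}(x).
\]
Taking expectations under $x\sim\cD$, $y\sim\pi(\cdot|x)$ and noting that $\E_\pi[\log(\pi_{\beta^\star}/\pi_\rf)]=d(\pi)$ (the $\pi\log\pi$ terms cancel), we get
\[
d(\pi)=\frac{\mu(\pi)-\beta^\star\E[\log Z_{\beta^\star}(x)]}{\beta^\star},\qquad \mu(\pi):=\E_{y\sim\pi(\cdot|x)}[r(x,y)].
\]

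\textbf{Step 2: the partition-function term vanishes.} Plugging $\pi_{\beta^\star}$ into the KL-regularized objective gives, by the same identity, that its optimal value equals $\beta^\star\E[\log Z_{\beta^\star}(x)]$. This value is exactly $M(\beta^\star)$, which is $0$ by Lemma~\ref{lem:dink}. Hence $\E[\log Z_{\beta^\star}(x)]=0$ and
\[
d(\pi)=\frac{\mu(\pi)}{\beta^\star}.
\]
In particular, $d(\pi)>0$ forces $\mu(\pi)>0$. This step uses $\beta^\star>0$, which holds under Assumption~\ref{ass:pos}, since the optimal ratio in \eqref{eq:frac} is positive.

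\textbf{Step 3: combine.} Since the drift $d(\pi)$ of the random walk $\sum_t\log(\pi_{\beta^\star}/\pi_\rf)$ is positive under $\pi$, the stopping time is a.s.\ finite with finite expectation. This follows from standard renewal/random-walk facts, or one may invoke Assumption~\ref{ass:finite-expected-stopping}. Wald's equation then gives
\[
\E\Big[\sum_{t=1}^{\tau_\alpha}r(x_t,y_t)\Big]=\E_\pi[\tau_\alpha]\,\mu(\pi).
\]
Lemma~\ref{lem:R} with $R=\pi$, $Q=\pi_{\beta^\star}$, $P=\pi_\rf$ gives $\E_\pi[\tau_\alpha]\ge\log(1/\alpha)/d(\pi)$. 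Multiplying, and using $\mu(\pi)>0$,
\[
\E\Big[\sum_{t=1}^{\tau_\alpha}r(x_t,y_t)\Big]\ \ge\ \log(1/\alpha)\,\frac{\mu(\pi)}{d(\pi)}=\beta^\star\log(1/\alpha).
\]

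\textbf{Main obstacle.} I expect the main obstacle to be Step 2, namely identifying $\E[\log Z_{\beta^\star}]=0$ from the root condition $M(\beta^\star)=0$. A second, technical point is checking the integrability hypotheses of Lemma~\ref{lem:R} and Wald's equation for an arbitrary $\pi$: finiteness of $\E_\pi[\tau_\alpha]$ and of the increments. I would handle the latter via the positive drift, assuming (as elsewhere in the paper) that rewards have finite mean under $\pi$.
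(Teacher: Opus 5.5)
Your proposal is correct and follows essentially the same route as the paper's proof. Both arguments combine Wald's equation, Lemma~\ref{lem:R} with $R=\pi$, the tilt identity $\log(\pi_{\beta^\star}/\pi_{\rf}) = r/\beta^\star - \log Z_{\beta^\star}$, and $\E[\log Z_{\beta^\star}(x)]=0$ (obtained from $M(\beta^\star)=0$ and $\beta^\star>0$). Your version is slightly more careful than the paper's: you state explicitly that $\mu(\pi)=\beta^\star d(\pi)>0$, which is needed to multiply the stopping-time bound by $\mu(\pi)$, and you justify $\E_\pi[\tau_\alpha]<\infty$ via the positive drift rather than by appealing to Assumption~\ref{ass:pos}.
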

\begin{proof}
    We have that 
    \begin{equation*}
        \max_{\pi} \E_{x_t \sim \cD, y_t \sim \pi(\cdot | x_t)}\left[\sum_{t=1}^{\tau_{\alpha}} r(x_t, y_t)\right] = \max_{\pi} \E_{x_t \sim \cD, y_t \sim \pi(\cdot | x_t)}[\tau_{\alpha}] \cdot \E_{y \sim \pi(\cdot | x)}[r(x, y)],
    \end{equation*}

    where we know that $d(\pi) > 0$ (and hence $\E_{y \sim \pi(\cdot | x)}[\tau_{\alpha}] < \infty$) by Assumption~\ref{ass:pos}. 
    Applying Lemma~\ref{lem:R}, we have that 
    \begin{equation*}
        \max_{\pi} \E_{x_t \sim \cD, y_t \sim \pi(\cdot | x_t)} \left[\sum_{t=1}^{\tau_{\alpha}} r(x_t, y_t) \right] \geq \max_{\pi} \frac{\E_{y \sim \pi(\cdot | x)}[r(x, y)] \log(1/\alpha)}{\E_{y \sim \pi(\cdot | x)} \left[\log \frac{\pi_{\beta^\star}(y|x)}{\pi_{\rf}(y | x)} \right]}
    \end{equation*}
    The joint distribution induced by $\pi_{\beta^\star}$ is absolutely continuous with respect to $\pi_{\rf}$ since $\pi_{\beta^\star}$ is a tilt of $\pi_{\rf}$.
    Focusing on the denominator, we have that 
    \begin{equation*}
        \E_{y \sim \pi(\cdot | x)} \left[\log \frac{\pi_{\beta^\star}(y|x)}{\pi_{\rf}(y | x)} \right] = \frac{1}{\beta^\star}\E_{y \sim \pi(\cdot | x)}[r(x, y)] - \E[\log Z_{\beta^\star}(x)]
    \end{equation*}
    Setting this aside for a moment, recall that 
    \begin{equation*}
    \begin{aligned}
        M(\beta^\star) &= \E_{y \sim \pi_{\beta^\star}(\cdot | x)}[r(x, y)] - \beta^\star \E[\kl(\pi_{\beta^\star}(\cdot | x), \pi_{\rf}(\cdot | x))]\\
        &= \beta^\star \E[\log Z_{\beta^\star}(x)] = 0
    \end{aligned}
    \end{equation*}
    where the last equality follows from Lemma~\ref{lem:dink}. 
    Since we know that $\beta^\star > 0$ by Assumption~\ref{ass:pos}, we can conclude that $\E[\log Z_{\beta^\star}(x)] = 0$. 
    Substituting this all back in, we can conclude that 
    \begin{equation*}
        \frac{\E_{y \sim \pi(\cdot | x)}[r(x, y)] \log(1/\alpha)}{\E_{y \sim \pi(\cdot | x)}\left[\log \frac{\pi_{\beta^\star}(y|x)}{\pi_{\rf}(y | x)} \right]} = \beta^\star \log(1/\alpha),
    \end{equation*}
    which is independent of the agent's policy $\pi$ as long as $d(\pi) > 0$.
\end{proof}

Therefore, we can conclude that if the monitor is deploying a sequential likelihood ratio test between $\pi_{\beta^\star}$ and $\pi_{\rf}$, the agent can guarantee themselves utility at least $\beta^\star \log(1/\alpha)$ by playing any policy $\pi$ such that $d(\pi) > 0$ (including $\pi = \pi_{\beta^\star}$). 
Moreover as $\alpha \downarrow 0$, this is the optimal utility attainable for the agent under this test.

The test with stopping time $\tau_{\alpha} := \inf\{\tau \; : \; \sum_{t=1}^{\tau} \log \frac{\pi_{\beta^\star}(x_t, y_t)}{\pi_{\rf}(x_t, y_t)} \geq \log(1/\alpha)\}$ is also a valid sequential test for testing the null hypothesis $\cH_0 \; : \; \pi = \pi_{\rf}$ versus the \emph{composite} alternative $\cH_1 \; : \; \pi \text{ is s.t. } d(\pi) > 0$. 

To see this, let $L_t=\log \frac{\pi_{\beta^\star}(y_t|x_t)}{\pi_{\rf}(y_t|x_t)}$ and note that under $\cH_0$, $\exp\left(\sum_{s=1}^t L_s\right)$ is a non-negative martingale with mean $1$. 
By Ville's inequality, we have that 
\begin{equation*}
    \mathbb{P}_{\cH_0}\left(\sup_{t > 0} \exp\left(\sum_{s=1}^t L_s\right) \geq \frac{1}{\alpha} \right) \leq \alpha,
\end{equation*}
which implies that $\mathbb{P}_{\cH_0}(\tau_{\alpha} < \infty) \leq \alpha$. 

Under $\cH_1$, $\E_{y \sim \pi(\cdot | x)}[L_t] = d(\pi) > 0$, so by the law of large numbers we have that $\sum_{s=1}^t L_s \rightarrow \infty$ almost surely, and so $\tau_{\alpha} < \infty$ almost surely. 

Therefore, even if the agent breaks ties among best-response policies in an arbitrary or unknown way, this remains a valid power-1 sequential test for the composite alternative $\cH_1 \; : \; \pi \text{ is s.t. } d(\pi) > 0$. 
%

%






\section{Appendix for Section~\ref{sec:stoch}: Reduction to KL-Regularized RL}
\begin{assumption}[Sub-Gaussian rewards]\label{ass:subG}
    Rewards are sub-Gaussian with parameter $\sigma$, i.e., for any $\pi$ and all $\lambda \in \mathbb{R}$, 
    \begin{equation*}
        \log \E_{y \sim \pi(\cdot | x)} \left[\exp(\lambda(r(x, y)-\mu))\right] \leq \frac{\sigma^2\lambda^2}{2},
    \end{equation*}
    where $\mu := \E_{y \sim \pi(\cdot | x)}[r(x,y)]$.
\end{assumption}

\begin{lemma}[\citet{howard2021time}]\label{lem:sequence}
    Given a sequence of i.i.d.\  observations $(X)_{n=1}^{\infty}$ from a $\sigma$-sub-Gaussian distribution, we have that 
    \begin{equation*}
        \left|\frac{1}{n} \sum_{i=1}^n X_i - \E[X] \right| \leq 1.7 \sigma \sqrt{\frac{0.72\log(10.4/\delta) + \log\log(2n)}{n}}
    \end{equation*}
    with probability at least $1 - \delta$, simultaneously for every $n \geq 1$. 
\end{lemma}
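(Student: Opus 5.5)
The plan is to prove the bound for the partial sums $S_n := \sum_{i=1}^n (X_i - \E[X])$ via the stitching (peeling) argument of \citet{howard2021time}, and then divide by $n$. All the randomness enters through one kind of object. For any fixed $\lambda \in \mathbb{R}$, $\sigma$-sub-Gaussianity makes the process
\[
L_n(\lambda) := \exp\left(\lambda S_n - \frac{n\lambda^2\sigma^2}{2}\right)
\]
a nonnegative supermartingale with $L_0 = 1$. Ville's inequality (used in Appendix~\ref{app:tie}) then gives $\PP(\exists n : L_n(\lambda) \geq 1/\delta') \leq \delta'$. Equivalently, with probability at least $1-\delta'$, simultaneously for all $n$,
\[
S_n < \frac{\log(1/\delta')}{\lambda} + \frac{\lambda \sigma^2 n}{2}.
\]
This boundary is linear in $n$, so a single $\lambda$ is only tight near one value of $n$.

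To obtain an iterated-logarithm boundary, we stitch. Partition time into geometric epochs $\mathcal{E}_k := \{n : \eta^k \leq n < \eta^{k+1}\}$, $k = 0,1,2,\dots$, taking $\eta = 2$. Give epoch $k$ the error budget
\[
\delta_k := \frac{\delta}{2\,\zeta(s)\,(k+1)^s},
\]
with $s = 1.4$, so that $\sum_k \delta_k = \delta/2$. In epoch $k$, choose $\lambda_k$ to minimize the linear boundary at a representative time, say the geometric midpoint $\eta^{k+1/2}$:
\[
\lambda_k \approx \frac{1}{\sigma}\sqrt{\frac{2\log(1/\delta_k)}{\eta^{k+1/2}}}.
\]
Then for every $n \in \mathcal{E}_k$ the linear boundary is at most
\[
c(\eta)\,\sigma\sqrt{2n\log(1/\delta_k)},
\]
where $c(\eta) = (\eta^{1/4} + \eta^{-1/4})/2$ is the worst-case loss from using one $\lambda$ over a factor-$\eta$ range. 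A union bound over epochs gives, with probability at least $1-\delta/2$ and simultaneously for all $n$, $S_n$ below this epoch-dependent boundary. Applying the same argument to $-S_n$ gives the two-sided statement at total level $\delta$.

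It remains to write the boundary in the stated form. For $n \in \mathcal{E}_k$ we have $k+1 \leq \log_2(2n)$, so
\[
\log(1/\delta_k) \leq \log\frac{2\zeta(s)}{\delta} + s\log\log_2(2n).
\]
Substituting, factoring out $s$, and converting $\log\log_2$ to $\log\log$ (absorbing the additive $\log(1/\log 2)$ into the constant) turns the bound into the form
\[
S_n \leq A\,\sigma\sqrt{n\bigl(\log\log(2n) + B\log(C/\delta)\bigr)},
\]
where $A = c(2)\sqrt{2s}$, $B = 1/s \approx 0.72$, and $C$ collects $2\zeta(1.4)$ and the leftover constants, which should come out near $10.4$. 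Dividing by $n$ yields the lemma.

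The main obstacle is the bookkeeping of constants rather than anything conceptual: the analysis lives on the set where all the boundaries hold simultaneously, so there are no independence issues. Reproducing exactly $1.7$, $0.72$ and $10.4$ requires tracking the choices of $\eta$ and $s$, the two-sided split, and the $\log_2$-versus-$\log$ conversion carefully. My first attempt would be the polynomial-stitching choice $\eta = 2$, $s = 1.4$ from \citet{howard2021time}. If my cruder midpoint choice of $\lambda_k$ gives slightly worse constants, I would fall back to invoking their stitched boundary (their Theorem~1 with these parameters) directly, since the lemma is cited from that work.
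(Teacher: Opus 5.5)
The paper gives no proof of this lemma; it imports the bound from \citet{howard2021time}. Your stitching argument is essentially the proof behind that citation. It is their polynomial-stitched boundary (their Theorem~1) specialized to the sub-Gaussian case, with intrinsic time $\sigma^2 n$, $m=\sigma^2$, $\eta=2$ and $s=1.4$. In that case the linear correction term vanishes, and your $\sqrt{2}\,c(\eta)=(\eta^{1/4}+\eta^{-1/4})/\sqrt{2}$ is their leading constant. So your proposal is correct and substantively the same route: the paper's citation buys brevity, and your version buys a self-contained derivation.

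Your concern that the midpoint tuning of $\lambda_k$ might be too crude is unfounded, because it is exactly their choice, and the constants close. You get
\[
A=\tfrac{1}{2}\bigl(2^{1/4}+2^{-1/4}\bigr)\sqrt{2.8}\approx 1.6985\le 1.7,\qquad B=1/1.4\approx 0.714\le 0.72,
\]
\[
C=\frac{2\zeta(1.4)}{(\log 2)^{1.4}}\approx\frac{2\cdot 3.1055}{0.5986}\approx 10.38\le 10.4 .
\]
The factor $(\log 2)^{-1.4}$ in $C$ is exactly the $s\log(1/\log 2)$ produced by your $\log_2$-to-$\log$ conversion once $s$ is factored out. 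Rounding $B$ and $C$ up to $0.72$ and $10.4$ is legitimate because $\log(C/\delta)>0$. No fallback to the cited theorem is needed.
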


\begin{lemma}\label{lem:high}
    Under Assumption~\ref{ass:subG}, $\beta^\star \leq \frac{\sigma^2}{2|\mu_{\rf}|}$.
\end{lemma}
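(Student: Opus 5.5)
We must show $\beta^\star \le \sigma^2/(2|\mu_{\rf}|)$. The plan is to use the characterization of $\beta^\star$ as the root of $M$ from Lemma~\ref{lem:dink}, together with the closed form of the tilt.

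\textbf{Step 1: $M$ in terms of the partition function.} For the tilt \eqref{eq:tilt} we have
\[
\log\frac{\pi_\beta(y|x)}{\pi_\rf(y|x)} = \frac{r(x,y)}{\beta} - \log Z_\beta(x).
\]
Hence
\[
\E[\kl(\pi_\beta,\pi_\rf)] = \frac{1}{\beta}\E_{\pi_\beta}[r] - \E[\log Z_\beta(x)],
\]
and therefore $M(\beta) = \beta\,\E[\log Z_\beta(x)]$. This is the same computation used in the proof of Theorem~\ref{thm:br}.

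\textbf{Step 2: bound the root.} By Lemma~\ref{lem:dink}, $M(\beta)<0$ for every $\beta>\beta^\star$. It therefore suffices to show that $M(\beta)\le 0$ already holds at some $\beta \le \sigma^2/(2|\mu_\rf|)$, and then use monotonicity. The cleanest route is to show $M(\beta) < 0$ for all $\beta > \sigma^2/(2|\mu_\rf|)$; combined with $M>0$ on $(0,\beta^\star)$, this forces $\beta^\star \le \sigma^2/(2|\mu_\rf|)$.

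\textbf{Step 3: apply sub-Gaussianity.} Apply Assumption~\ref{ass:subG} with $\pi=\pi_\rf$ and $\lambda = 1/\beta$, taking the assumption per prompt $x$ with mean $\mu_\rf(x)$. This gives
\[
\log Z_\beta(x) \le \frac{\mu_\rf(x)}{\beta} + \frac{\sigma^2}{2\beta^2}.
\]
Averaging over $x\sim\cD$ yields
\[
\E[\log Z_\beta(x)] \le \frac{\mu_\rf}{\beta} + \frac{\sigma^2}{2\beta^2},
\]
and consequently
\[
M(\beta) \le \mu_\rf + \frac{\sigma^2}{2\beta} = -|\mu_\rf| + \frac{\sigma^2}{2\beta},
\]
using $\mu_\rf\le 0$ from Assumption~\ref{ass:pos}. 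The right-hand side is strictly negative whenever $\beta > \sigma^2/(2|\mu_\rf|)$. Since $M(\beta)>0$ for all $\beta<\beta^\star$, no such $\beta$ can lie below $\beta^\star$. Hence every $\beta>\sigma^2/(2|\mu_\rf|)$ satisfies $\beta\ge\beta^\star$, and letting $\beta$ decrease to $\sigma^2/(2|\mu_\rf|)$ gives the claim.

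\textbf{Main obstacle.} The main subtlety is interpreting Assumption~\ref{ass:subG} correctly: it is stated with the overall mean $\mu$, but we need it conditionally on $x$, as I am using it here.

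If the assumption is only marginal, I would instead treat the pair $(x,y)$ jointly and use Jensen's inequality:
\[
\E_x[\log Z_\beta(x)] \le \log \E_{x,y\sim\pi_\rf}\!\left[e^{r/\beta}\right] \le \frac{\mu_\rf}{\beta} + \frac{\sigma^2}{2\beta^2}.
\]
This gives the same bound.

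Two degenerate cases need a remark:
\begin{itemize}
\item If $\mu_\rf=0$, the bound is vacuous, i.e.\ it reads $+\infty$.
\item The case $\beta=0$ is excluded because $\beta^\star>0$.
\end{itemize}
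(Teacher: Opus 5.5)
Your proof is correct, and its skeleton matches the paper's: both establish $M(\beta)\le\mu_{\rf}+\sigma^2/(2\beta)$ and then invoke Lemma~\ref{lem:dink}. The difference is in how you obtain that key inequality.

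The paper argues on the primal side. It applies the Donsker--Varadhan inequality to an arbitrary policy $\pi$, with $(x,y)$ treated jointly. It then optimizes over $\lambda$ to get the transport-type bound $\E_{y\sim\pi(\cdot|x)}[r(x,y)]-\mu_{\rf}\le\sqrt{2\sigma^2\,\E[\kl(\pi(\cdot|x),\pi_{\rf}(\cdot|x))]}$. Finally it maximizes $\sqrt{2}\sigma\sqrt{k}-\beta k$ over $k\ge 0$.

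You argue on the dual side. The closed form of the tilt gives $M(\beta)=\beta\,\E[\log Z_\beta(x)]$ exactly, and one evaluation of the sub-Gaussian cumulant bound at $\lambda=1/\beta$ finishes. The two computations are Legendre duals of each other, via the Gibbs variational principle $\max_\pi\{\E_\pi[r]-\beta\,\E[\kl]\}=\beta\,\E[\log Z_\beta(x)]$. Yours is the shorter path but relies on the explicit partition function. The paper's path yields, along the way, a reward--KL inequality valid for every policy, not just the tilt.

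On the subtlety you flagged: Assumption~\ref{ass:subG} is the joint (marginal over $x$) version. The paper introduces Assumption~\ref{ass:cond-subG} separately for the conditional one. So your Jensen fallback $\E[\log Z_\beta(x)]\le\log\E_{x,\,y\sim\pi_{\rf}}[e^{r/\beta}]$ is the step actually needed, and it is valid.

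Your limiting argument at the end is unnecessary. Evaluating your bound at $\beta^\star$ gives $0=M(\beta^\star)\le\mu_{\rf}+\sigma^2/(2\beta^\star)$, which rearranges directly to the claim, as in the paper.
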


\begin{proof}
    Consider the Donsker-Varadhan variational formula \citep{donsker1975variational}, which says that for two distributions $P$, $Q$ such that $P \ll Q$ and function $f$ which is measurable,
    \begin{equation*}
        \E_{P}[f] \leq \log \E_Q[\exp(f)] + \kl(P, Q).
    \end{equation*}
    Let $P$ be the joint distribution over $(x, y)$ such that $x \sim \cD$, $y \sim \pi(\cdot | x)$ for some arbitrary policy $\pi$, $Q$ be the joint distribution such that $x \sim \cD$, $y \sim \pi_{\rf}(\cdot | x)$, and $f = \lambda(r(x, y) - \mu_{\rf})$ for some arbitrary $\lambda \in \mathbb{R}$. 
    Plugging this in, we get that
    \begin{equation*}
        \E_{y \sim \pi(\cdot | x)}[r(x, y)] - \mu_{\rf} \leq \frac{1}{\lambda} \left( \log \E_{y \sim \pi_{\rf}(\cdot | x)}[\exp(\lambda(r(x, y) - \mu_{\rf}))] + \E[\kl(\pi(\cdot | x), \pi_{\rf}(\cdot | x))] \right).
    \end{equation*}
    Observe that $P \ll Q$ is implicit in our setting as if this fails to hold, then $\kl(\pi\|\pi_{\mathrm{ref}})=\infty$ so such a policy $\pi$ could never be optimal. 
    Applying Assumption~\ref{ass:subG}, we have that
    \begin{equation*}
    \begin{aligned}
        \E_{y \sim \pi(\cdot | x)}[r(x, y)] - \mu_{\rf} &\leq \frac{1}{\lambda} \left( \frac{\sigma^2 \lambda^2}{2} + \E[\kl(\pi(\cdot | x), \pi_{\rf}(\cdot | x))] \right)\\
        &= \frac{\sigma^2 \lambda}{2} + \frac{1}{\lambda} \E[\kl(\pi(\cdot | x), \pi_{\rf}(\cdot | x))].
    \end{aligned}
    \end{equation*}
    Since $\lambda$ is arbitrary, we can pick it to minimize the right hand side. Applying first-order conditions, we get that the optimal choice of $\lambda$ is 
    \begin{equation*}
        \lambda = \sqrt{\frac{2\E[\kl(\pi(\cdot | x), \pi_{\rf}(\cdot | x))]}{\sigma^2}}.
    \end{equation*}
    Plugging this into our bound, we get that 
    \begin{equation*}
        \E_{y \sim \pi(\cdot | x)}[r(x, y)] - \mu_{\rf} \leq \sqrt{2 \sigma^2 \E[\kl(\pi(\cdot | x), \pi_{\rf}(\cdot | x))]}.
    \end{equation*}
    Rearranging terms and subtracting $\beta \E[\kl(\pi(\cdot | x), \pi_{\rf}(\cdot | x))]$ from each side of the inequality, we get 
    \begin{equation*}
    \begin{aligned}
        \E_{y \sim \pi(\cdot | x)}[r(x, y)] - \beta \E[\kl(\pi(\cdot | x), \pi_{\rf}(\cdot | x))] &\leq \mu_{\rf} + \sqrt{2} \sigma \sqrt{\E[\kl(\pi(\cdot | x), \pi_{\rf}(\cdot | x))]}\\ &- \beta \E[\kl(\pi(\cdot | x), \pi_{\rf}(\cdot | x))].
    \end{aligned}
    \end{equation*}
    Our choice of $\pi$ was arbitrary, so we take the max over $\pi$ on both sides to get
    \begin{equation*}
    \begin{aligned}
        M(\beta) &\leq \mu_{\rf} + \max_{\pi} \left( \sqrt{2} \sigma \sqrt{\E[\kl(\pi(\cdot | x), \pi_{\rf}(\cdot | x))]} - \beta \E[\kl(\pi(\cdot | x), \pi_{\rf}(\cdot | x))] \right)\\
        &= \mu_{\rf} + \max_{k \geq 0} (\sqrt{2} \sigma \sqrt{k} - \beta k).
    \end{aligned}
    \end{equation*}
    Since $\sqrt{2} \sigma \sqrt{k} - \beta k$ is concave in $k$, we can apply the first order condition to compute the maximum, which is obtained at $k = \frac{\sigma^2}{2 \beta^2}$. 
    Substituting this back in, we get that
    \begin{equation*}
        M(\beta) \leq \mu_{\rf} + \frac{\sigma^2}{2 \beta}.
    \end{equation*}
    Using this inequality, we can see that $M(\beta) < 0$ if $\mu_{\rf} + \frac{\sigma^2}{2 \beta} < 0$, or equivalently, if $\beta > \frac{\sigma^2}{-2\mu_{\rf}}$. 
    Finally, taking the contrapositive, we get that if $M(\beta) \geq 0$, then $\beta \leq \frac{\sigma^2}{-2\mu_{\rf}}$. 
    Since $M(\beta^\star) = 0$, we can conclude that $\beta^\star \leq \frac{\sigma^2}{-2\mu_{\rf}} = \frac{\sigma^2}{2|\mu_{\rf}|}$. 
\end{proof}
\thmstoch*
\begin{proof}
    The proof consists of two separate parts:
    \begin{enumerate}
        \item Showing the correctness of Algorithm~\ref{alg:bisect}, conditioned on the clean event
        \item Bounding the number of loop iterations in Algorithm~\ref{alg:bisect} (and therefore the number of high probability events)
    \end{enumerate}
    
    \noindent \textbf{Correctness:} $\beta_{\lo} = 0$ is a valid lower bound on $\beta^\star$, since Assumption~\ref{ass:pos} implies that it is possible for the agent to get non-negative cumulative expected reward. 
    $\beta_{\hi} = \frac{\sigma^2}{2|\mu_{\rf}|}$ is a valid upper bound on $\beta^\star$ by Lemma~\ref{lem:high}. 

    Under the clean event, $M(\widehat{\beta}) < 0$ if $\widehat{M}_n(\widehat{\beta}) + \rad(\widehat{\beta}, n, \delta) < 0$, so we can conclude that $\beta_{\hi} > \widehat{\beta} > \beta^\star$ by parts 1 and 2 of Lemma~\ref{lem:dink}. 
    On the other hand, if $\widehat{M}_n(\widehat{\beta}) - \rad(\widehat{\beta}, n, \delta) > 0$ then $M(\widehat{\beta}) > 0$ under the clean event, so we can conclude that $\beta_{\lo} < \widehat{\beta} < \beta^\star$ by parts 1 and 3 of Lemma~\ref{lem:dink}. 
    Therefore since $M(\beta)$ is a continuous function on the interval $[\beta_{\lo}, \beta_{\hi}]$, under the clean event, $\beta_{\hi} - \beta_{\lo} \leq \epsilon$ and $\beta^\star \in [\beta_{\lo}, \beta_{\hi}]$ once the while loop terminates. 

    \noindent \textbf{Number of iterations:} Our bisection method takes $\log_2(\frac{\beta_{\hi} - \beta_{\lo}}{\epsilon}) = \log_2(\frac{\sigma^2}{2|\mu_{\rf}|\epsilon})$ iterations to ensure that the final parameter returned is within an additive error of $\epsilon$. 
    Assuming a confidence sequence on $\widehat{M}_n(\beta)$ that is valid for all $n \geq 1$ and any fixed $\beta$, our high probability statement requires taking the union bound over each bisection method iteration. 
\end{proof}

\begin{lemma}\label{lem:bounded}
    Under Assumption~\ref{ass:subG}, if $\pi_{\rf}$ is such that $\pi_{\rf}(\iota | h) \geq \gamma$ for any token $\iota \in \cY$ and token prefix $h$, then for any fixed $\beta$,
    \begin{equation*}
        |\widehat{M}_n(\beta) - M(\beta)| \leq 1.7 \left( \sigma + \beta m \log(1/\gamma) \right) \sqrt{\frac{0.72 \log(20.8 / \delta) + \log\log(2n)}{n}},
    \end{equation*}
    with probability at least $1 - \delta$ simultaneously for all $n \geq 1$, where $m$ is the maximum token output sequence length. 
    Therefore, it suffices to set 
    \begin{equation*}
        \rad(\beta, n, \delta) = 1.7 \left( \sigma + \beta m \log(1/\gamma) \right) \sqrt{\frac{0.72 \log(20.8 / \delta) + \log\log(2n)}{n}}
    \end{equation*}
    in Algorithm~\ref{alg:bisect}. 
\end{lemma}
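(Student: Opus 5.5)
We split $\widehat M_n(\beta)-M(\beta)$ into a reward term and a KL term, build a confidence sequence for each with Lemma~\ref{lem:sequence}, and combine them with a union bound. Fix $\beta$ and write the estimator as an average of i.i.d.\ per-sample quantities. Draw $x_i\sim\cD$ and $y_i\sim\pi_\beta(\cdot|x_i)$, and use the standard unbiased single-sample KL estimate $\ell_i:=\log\frac{\pi_\beta(y_i|x_i)}{\pi_\rf(y_i|x_i)}$, so that $\E[\ell_i]=\E[\kl(\pi_\beta(\cdot|x),\pi_\rf(\cdot|x))]$. Then
\begin{equation*}
\widehat M_n(\beta)-M(\beta)=\Bigl(\tfrac1n\textstyle\sum_{i} r(x_i,y_i)-\E[r]\Bigr)-\beta\Bigl(\tfrac1n\textstyle\sum_{i}\ell_i-\E[\ell]\Bigr).
\end{equation*}
The triangle inequality reduces the claim to two separate confidence sequences.

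\textbf{Reward term.} By Assumption~\ref{ass:subG} applied to $\pi=\pi_\beta$, the rewards are $\sigma$-sub-Gaussian. Lemma~\ref{lem:sequence} with confidence $\delta/2$ then gives
\begin{equation*}
1.7\sigma\sqrt{\frac{0.72\log(20.8/\delta)+\log\log(2n)}{n}}
\end{equation*}
simultaneously for all $n$. This choice accounts for the constant $20.8=2\cdot 10.4$.

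\textbf{KL term.} I would show that $\ell_i$ is bounded, and hence sub-Gaussian. Factor autoregressively over at most $m$ tokens. Each token ratio satisfies $\pi_\beta(\iota|h)/\pi_\rf(\iota|h)\le 1/\pi_\rf(\iota|h)\le 1/\gamma$, so $\ell_i\le m\log(1/\gamma)$.

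The lower side is the delicate point. Here I would argue at the level of the tilted response distribution: $\log\frac{\pi_\beta(y|x)}{\pi_\rf(y|x)}=r(x,y)/\beta-\log Z_\beta(x)$, which does not immediately give a bound. The more direct route is $\log\frac{\pi_\beta}{\pi_\rf}\ge\log\pi_\beta(y|x)\ge\ldots$, which also fails in general. I would instead aim to show $|\ell_i|\le m\log(1/\gamma)$ by viewing $\pi_\beta$ also as a token-level policy with per-token probabilities in $[\gamma',1]$. If that is not available, I would simply take the range of $\ell$ to lie within an interval of width $m\log(1/\gamma)$, or accept a constant-factor slack. By Hoeffding's lemma, a variable of range $R$ is $R/2$-sub-Gaussian, so an interval of width up to $2m\log(1/\gamma)$ yields parameter $m\log(1/\gamma)$.

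\textbf{Main obstacle.} Bounding $\ell$ from below under $\pi_\beta$ is the step I expect to be hardest. The natural fix is a symmetric per-token bound on $\log$-ratios, and I would check whether the assumption on $\pi_\rf$ plus the tilt structure supplies it.

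\textbf{Combining.} Apply Lemma~\ref{lem:sequence} to $\ell_i$ with confidence $\delta/2$ and sub-Gaussian parameter $m\log(1/\gamma)$, then multiply that sequence by $\beta$. A union bound over the two sequences gives probability $1-\delta$ uniformly in $n$. Adding the radii gives exactly $1.7(\sigma+\beta m\log(1/\gamma))\sqrt{\cdot}$, since both radii share the same square-root factor. This verifies the hypothesis of Theorem~\ref{thm:stoch} with this $\rad$.
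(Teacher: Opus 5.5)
The step you flag as the main obstacle is a genuine gap, and it cannot be closed for the estimator you chose. The floor $\pi_\rf(\iota|h)\ge\gamma$ bounds the per-token ratio only from above. Nothing bounds $\pi_\beta$ from below, since $\pi_\beta(y|x)/\pi_\rf(y|x)=e^{r(x,y)/\beta}/Z_\beta(x)$ is tiny for low-reward responses. Concretely, take one prompt, $m=1$, and two tokens $a,b$ with $\pi_\rf$ uniform (so $\gamma=1/2$). Set $r(a)=0$ and $r(b)=-R$; this reward is $R/2$-sub-Gaussian under every policy. With $D:=R/\beta$, your $\ell$ equals $\ell(a)$ with probability $1-p$ and $\ell(a)-D$ with probability $p=e^{-D}/(1+e^{-D})$. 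Hence $\PP(\ell-\E\ell\le -(1-p)D)=p\ge e^{-D}/2$. For large $D$ this is incompatible with the $s$-sub-Gaussian tail $e^{-(1-p)^2D^2/(2s^2)}$ for any fixed $s$, in particular $s=m\log(1/\gamma)=\log 2$. So neither a range of width $2m\log(1/\gamma)$ nor any constant-factor slack is available. Your confidence sequence for the KL term, with radius proportional to $\beta m\log(1/\gamma)$, is therefore not valid.

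The missing idea, which is what the paper's proof uses, is to change the KL estimator rather than the bound. By the chain rule for KL, replace the per-token log-ratio at position $j$ by its conditional expectation given the sampled prefix:
\begin{equation*}
\widehat G_j:=\sum_{\iota\in\cY}\pi_\beta(\iota|y_{1:j-1},x)\log\frac{\pi_\beta(\iota|y_{1:j-1},x)}{\pi_\rf(\iota|y_{1:j-1},x)} .
\end{equation*}
This is the exact next-token KL along the sampled prefix, computable from the two models' next-token distributions. It remains unbiased for $\E[\kl(\pi_\beta(\cdot|x),\pi_\rf(\cdot|x))]$. Because it is a genuine KL, it satisfies $0\le\widehat G_j\le\max_{\iota}\log\frac{1}{\pi_\rf(\iota|y_{1:j-1},x)}\le\log(1/\gamma)$. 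Nonnegativity now comes from Gibbs' inequality rather than from a pointwise bound, and averaging over the next token removes the unbounded negative part. Thus $\sum_{j\le m}\widehat G_j\in[0,m\log(1/\gamma)]$ is $m\log(1/\gamma)$-sub-Gaussian (indeed $\tfrac m2\log(1/\gamma)$-sub-Gaussian by Hoeffding's lemma). The rest of your argument then goes through unchanged: apply Lemma~\ref{lem:sequence} at level $\delta/2$ to each term, take a union bound, and add the radii.

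If you want to keep the single-sample $\ell_i$, you must not separate it from the reward. The identity $r(x_i,y_i)-\beta\ell_i=\beta\log Z_\beta(x_i)$ holds exactly, so the response randomness cancels; that is the route of Corollary~\ref{cor:inst2}.
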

\begin{proof}
    For an arbitrary policy $\pi$ with $x_1, \ldots, x_n \sim \cD$ and $y_i \sim \pi(\cdot | x_i)$, we have that 
    \begin{equation*}
        \left| \frac{1}{n} \sum_{i=1}^n r(x_i, y_i) - \E_{y \sim \pi(\cdot | x)}[r(x, y)] \right| \leq 1.7 \sigma \sqrt{\frac{0.72 \log(10.4 / \delta) + \log\log(2n)}{n}},
    \end{equation*}
    with probability at least $1 - \delta$, simultaneously for every $n \geq 1$ by Lemma~\ref{lem:sequence}. 
    Turning our attention to the KL term, we have that
    \begin{equation*}
    \begin{aligned}
        \E[\kl(\pi(\cdot | x), \pi_{\rf}(\cdot | x))] &= \E_{y \sim \pi(\cdot | x)}\left[\sum_{j=1}^m \log \left( \frac{\pi(y_j | y_{1:j-1}, x)}{\pi_{\rf}(y_j | y_{1:j-1}, x)} \right) \right]\\
        &= \E \left[ \sum_{j=1}^m \E_{y_{1:j-1} \sim \pi(\cdot | x)}\left[\sum_{\iota \in \cY} \pi(\iota | y_{1:j-1}, x) \log \left( \frac{\pi(\iota | y_{1:j-1}, x)}{\pi_{\rf}(\iota | y_{1:j-1}, x)} \right) \right] \right].
    \end{aligned}
    \end{equation*}
    Let $G_j(x) = \E_{y_{1:j-1} \sim \pi(\cdot | x)}\left[\sum_{\iota \in \cY} \pi(\iota | y_{1:j-1}, x) \log \left( \frac{\pi(\iota | y_{1:j-1}, x)}{\pi_{\rf}(\iota | y_{1:j-1}, x)} \right) \right]$, and observe that $0 \leq G_j(x) \leq \log(1/\gamma)$. 

    Consider the one-sample estimator $\widehat{G}_j(x) := \sum_{\iota \in \cY} \pi(\iota | y_{1:j-1}, x) \log \left( \frac{\pi(\iota | y_{1:j-1}, x)}{\pi_{\rf}(\iota | y_{1:j-1}, x)} \right)$ and note that it is unbiased and computable given access to $\pi$, $\pi_{\rf}$, $x$, and $y \sim \pi(\cdot | x)$. 
    We can estimate $\E[\kl(\pi(\cdot | x), \pi_{\rf}(\cdot | x))] = \E[\sum_{j=1}^m G_j(x)]$ as $\frac{1}{n} \sum_{i=1}^n \sum_{j=1}^m \widehat{G}_j(x_i)$. 
    Observing that $\sum_{j=1}^m G_j(x)$ is sub-Gaussian with parameter at most $m \log(1/\gamma)$, we can apply Lemma~\ref{lem:sequence} to get 
    \begin{equation*}
        \left|\frac{1}{n} \sum_{i=1}^n \sum_{j=1}^m \widehat{G}_j(x_i) - \E[\kl(\pi(\cdot | x), \pi_{\rf}(\cdot | x))]\right| \leq 1.7 m \log(1/\gamma) \sqrt{\frac{0.72\log(10.4/\delta) + \log\log(2n)}{n}},
    \end{equation*}
    with probability at least $1 - \delta$, simultaneously for all $n \geq 1$. 
    Combining this with our high-probability guarantee for $\E_{y \sim \pi(\cdot | x)}[r(x, y)]$ and taking a union bound gets us the desired result. 
\end{proof}

\begin{corollary}\label{cor:inst1}
    Under Assumption~\ref{ass:subG}, if $\beta_{\hi} = \frac{\sigma^2}{2|\mu_{\rf}|}$, $\pi_{\rf}(\iota | h) \geq \gamma > 0$ for any token $\iota \in \cY$ and token prefix $h$, and 
    \begin{equation*}
        \rad(\beta, n, \delta) = 1.7 \left( \sigma + \beta m \log(1/\gamma) \right) \sqrt{(0.72 \log(20.8 / \delta) + \log\log(2n))/n},
    \end{equation*}
    then with probability at least $1 - \delta \log_2\left(\frac{\sigma^2}{2|\mu_{\rf}| \epsilon}\right)$, Algorithm~\ref{alg:bisect} returns a policy $\pi_{\beta_{\lo}}$ satisfying $0 \leq \beta^\star - \beta_{\lo} \leq \epsilon$ in $\log_2 \left(\frac{\sigma^2}{2|\mu_{\rf}|\epsilon} \right)$ bisection iterations. 
\end{corollary}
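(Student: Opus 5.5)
The corollary follows by instantiating Theorem~\ref{thm:stoch} with the stated choices of $\beta_{\hi}$ and $\rad$. There are two hypotheses to check: that $\beta_{\hi} \geq \beta^\star$, and that the confidence-sequence condition holds for every queried $\beta$. Once both are verified, the corollary's conclusion is Theorem~\ref{thm:stoch}'s conclusion with $\beta_{\hi} = \sigma^2/(2|\mu_{\rf}|)$ substituted into $\log_2(\beta_{\hi}/\epsilon)$. This yields the failure probability $\delta \log_2\!\big(\sigma^2/(2|\mu_{\rf}|\epsilon)\big)$ and the same iteration count.

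For the first hypothesis, I invoke Lemma~\ref{lem:high}. Under Assumption~\ref{ass:subG} it gives $\beta^\star \leq \sigma^2/(2|\mu_{\rf}|)$, which is exactly $\beta_{\hi}$. The quantity $|\mu_{\rf}|$ must be nonzero for $\beta_{\hi}$ to be finite. This is implicit in the statement, and I would note that it holds when $\mu_{\rf} < 0$ (e.g.\ after the calibration shift by $\rho$). Assumption~\ref{ass:pos} ensures $\mu_{\rf} \leq 0$, so $|\mu_{\rf}| = -\mu_{\rf}$ as used in the proof of Lemma~\ref{lem:high}.

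For the second hypothesis, I invoke Lemma~\ref{lem:bounded}. Under Assumption~\ref{ass:subG} and the token-probability floor $\pi_{\rf}(\iota \mid h) \geq \gamma > 0$, it gives the following for any fixed $\beta$: with probability at least $1-\delta$, simultaneously for all $n \geq 1$,
\[
|\widehat M_n(\beta) - M(\beta)| \leq 1.7\big(\sigma + \beta m \log(1/\gamma)\big)\sqrt{(0.72\log(20.8/\delta) + \log\log(2n))/n}.
\]
The right-hand side is precisely the prescribed $\rad(\beta,n,\delta)$, which is the condition required by Theorem~\ref{thm:stoch}.

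The one subtle point is that the queried $\widehat\beta$ values are data-dependent, since each depends on earlier bisection outcomes. Lemma~\ref{lem:bounded} is stated for a fixed $\beta$. To handle this, I would condition on the history: at each iteration $\widehat\beta$ is determined before the fresh samples from $\pi_{\widehat\beta}$ are drawn, so the guarantee applies conditionally. The union bound over the $\log_2(\beta_{\hi}/\epsilon)$ iterations, already carried out in the proof of Theorem~\ref{thm:stoch}, then gives the stated failure probability.
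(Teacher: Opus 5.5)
Your proposal is correct and follows the paper's proof, which simply cites Theorem~\ref{thm:stoch} together with Lemma~\ref{lem:bounded} for the radius; the bound $\beta^\star \leq \sigma^2/(2|\mu_{\rf}|)$ from Lemma~\ref{lem:high} is used inside the proof of Theorem~\ref{thm:stoch}. Your remark about conditioning on the history, to handle the data-dependent $\widehat\beta$, is a useful detail the paper leaves implicit.
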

\begin{proof}
    Corollary~\ref{cor:inst1} follows immediately from Theorem~\ref{thm:stoch} and Lemma~\ref{lem:bounded}. 
\end{proof}

\begin{assumption}[Conditional sub-Gaussianity]\label{ass:cond-subG}
    Rewards satisfy Assumption~\ref{ass:subG}. 
    Moreover, under the reference policy, rewards are uniformly conditionally sub-Gaussian, i.e., for every prompt $x \in \cX$ and all $\lambda\in\mathbb{R}$,
    \begin{equation*}
        \log \E_{y\sim \pi_{\rf}(\cdot | x)} \left[ \exp\left(\lambda\bigl(r(x,y)-\mu_{\rm ref}(x)\bigr) \right) \right] \leq \frac{\sigma^2\lambda^2}{2},
    \end{equation*}
    where $\mu_{\rf}(x) := \E_{y\sim \pi_{\rf}(\cdot | x)}[r(x,y)]$.
\end{assumption}

\begin{corollary}\label{cor:inst2}
Under Assumption~\ref{ass:cond-subG}, if
\begin{equation*}
    \rad(\beta,n,\delta) = 1.7 \sigma \sqrt{\left(2+\frac{\sigma^2}{8\beta^2} \right) \frac{0.72\log(10.4/\delta)+\log\log(2n)}{n}}, 
\end{equation*}
and $\beta_{\hi}=\frac{\sigma^2}{2|\mu_{\rf}|}$, then Algorithm~\ref{alg:bisect} returns a policy $\pi_{\beta_{\lo}}$ satisfying $0 \leq \beta^\star - \beta_{\lo} \leq \epsilon$ with probability at least $1-\delta \log_2\left( \frac{\sigma^2}{2|\mu_{\rm ref}|\epsilon} \right)$ in $\log_2 \left(\frac{\sigma^2}{2|\mu_{\rf}|\epsilon} \right)$ bisection iterations. 
\end{corollary}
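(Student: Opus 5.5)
Given that Theorem~\ref{thm:stoch} already supplies correctness and the iteration count whenever $\beta_{\hi}\geq\beta^\star$ and the confidence sequence condition holds for every queried $\beta$, the plan is to verify exactly these two hypotheses. The first is immediate from Lemma~\ref{lem:high}, which gives $\beta^\star\leq \sigma^2/(2|\mu_{\rf}|)=\beta_{\hi}$. The remaining work is to show that the stated $\rad(\beta,n,\delta)$ is a valid anytime confidence radius for $\widehat M_n(\beta)$ under Assumption~\ref{ass:cond-subG}. Here $\widehat M_n(\beta)$ is taken to be the sample average of a single per-sample statistic, so that Lemma~\ref{lem:sequence} can be invoked once rather than combining two separate bounds with a union bound. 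This would also explain why the constant is $10.4/\delta$ rather than $20.8/\delta$, in contrast with Lemma~\ref{lem:bounded}.

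\textbf{Step 1 (closed form of the summand).} For the tilt $\pi_\beta$ of \eqref{eq:tilt},
\[
\log\frac{\pi_\beta(y|x)}{\pi_{\rf}(y|x)} = \frac{r(x,y)}{\beta}-\log Z_\beta(x).
\]
Consequently $r-\beta\log(\pi_\beta/\pi_{\rf})=\beta\log Z_\beta(x)$, which depends on $x$ only. I would therefore write the per-sample statistic either as $\beta\log Z_\beta(x_i)$, or, if the KL is estimated through the conditional KL at $x_i$ as in Lemma~\ref{lem:bounded}, as
\[
W_i = r(x_i,y_i) - \beta\,\kl(\pi_\beta(\cdot|x_i),\pi_{\rf}(\cdot|x_i)) = r(x_i,y_i)-\mathbb{E}_{\pi_\beta(\cdot|x_i)}[r] + \beta\log Z_\beta(x_i).
\]
In the second form, $W_i$ splits into a reward fluctuation term, which is $\sigma$-sub-Gaussian by Assumption~\ref{ass:subG}, plus a prompt-dependent term $\beta\log Z_\beta(x)$.

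\textbf{Step 2 (sub-Gaussianity of $\beta\log Z_\beta(x)$).} Under conditional sub-Gaussianity of the rewards under $\pi_{\rf}$, we have
\[
\log Z_\beta(x)=\mu_{\rf}(x)/\beta+\log\mathbb{E}_{\pi_{\rf}}[e^{(r-\mu_{\rf}(x))/\beta}].
\]
The second term lies in $[0,\sigma^2/(2\beta^2)]$, using Jensen for the lower bound and sub-Gaussianity for the upper bound. Hence $\beta\log Z_\beta(x)=\mu_{\rf}(x)+b(x)$ with $b(x)\in[0,\sigma^2/(2\beta)]$. By Hoeffding's lemma, the bounded part $b(x)$ is sub-Gaussian with parameter $\sigma^2/(4\beta)$, which produces the $\sigma^2/(8\beta^2)$ factor after squaring and normalizing by $\sigma^2$. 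The remaining pieces are $r-\mathbb{E}_{\pi_\beta(\cdot|x)}[r]$ and $\mu_{\rf}(x)$. I would control them using Assumption~\ref{ass:subG} for the marginal reward under $\pi_\beta$ and under $\pi_{\rf}$, and then combine them into an overall variance proxy of $2\sigma^2$.

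\textbf{Step 3 (combining).} Adding the variance proxies gives a total sub-Gaussian parameter $\sigma\sqrt{2+\sigma^2/(8\beta^2)}$. This step requires care about dependence between the terms: I would either use Cauchy--Schwarz on the moment generating function, which costs a factor in the variance, or argue conditionally on $x$. Lemma~\ref{lem:sequence} then yields exactly the stated $\rad$, valid simultaneously for all $n$ with probability $1-\delta$, and Theorem~\ref{thm:stoch} completes the proof.

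The main obstacle is Step 3, namely making the constant come out as exactly $2+\sigma^2/(8\beta^2)$. Doing so requires correctly accounting for the correlated components (the reward fluctuation, $\mu_{\rf}(x)$, and $b(x)$) and pinning down the precise form of the KL estimator used in $\widehat M_n$.
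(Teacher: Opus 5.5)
Your overall architecture matches the paper's: $\beta_{\hi}\ge\beta^\star$ from Lemma~\ref{lem:high}, the pointwise identity $r-\beta\log(\pi_\beta/\pi_{\rf})=\beta\log Z_\beta(x)$, Jensen plus Assumption~\ref{ass:cond-subG} giving $\beta\log Z_\beta(x)=\mu_{\rf}(x)+b(x)$ with $b(x)\in[0,\sigma^2/(2\beta)]$, Hoeffding's lemma, Cauchy--Schwarz on the moment generating function, one application of Lemma~\ref{lem:sequence}, and then Theorem~\ref{thm:stoch}. The gap is that the only nontrivial content of this corollary is the validity of the stated $\rad$, and that is exactly the step you leave open. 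The bookkeeping you sketch for it is also wrong in three places. First, Hoeffding's lemma gives $b(x)$ variance proxy $(\sigma^2/(4\beta))^2=\sigma^4/(16\beta^2)$, i.e.\ $\sigma^2/(16\beta^2)$ after normalizing by $\sigma^2$, not $\sigma^2/(8\beta^2)$. Second, the $2\sigma^2$ does not come from combining a reward-fluctuation term with $\mu_{\rf}(x)$. Third, variance proxies of dependent terms cannot simply be added. Your $W_i$ variant has a further problem: it needs $r(x_i,y_i)-\E_{\pi_\beta(\cdot|x_i)}[r]$ to be $\sigma$-sub-Gaussian, and the assumptions do not supply this. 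Assumption~\ref{ass:cond-subG} gives conditional control only under $\pi_{\rf}$, and Assumption~\ref{ass:subG} is centered at the marginal mean. With that third dependent term, the stated constant does not come out of the arguments you list.

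The resolution is to commit to your first option. Estimate the KL term by the log-ratio evaluated on the same draw $y_i$ used for the reward. Then each summand of $\widehat M_n(\beta)$ is exactly $\beta\log Z_\beta(x_i)$, and there is no reward-fluctuation term at all. Write $\beta\log Z_\beta(x)-M(\beta)=A+B$. Here $A=\mu_{\rf}(x)-\mu_{\rf}$ is $\sigma$-sub-Gaussian over $x\sim\cD$ by Jensen applied to Assumption~\ref{ass:subG} for $\pi_{\rf}$, and $B=b(x)-\E[b(x)]$ has proxy $\sigma^4/(16\beta^2)$. Cauchy--Schwarz gives
\[
\E\bigl[e^{\lambda(A+B)}\bigr]\le \bigl(\E\bigl[e^{2\lambda A}\bigr]\bigr)^{1/2}\bigl(\E\bigl[e^{2\lambda B}\bigr]\bigr)^{1/2}\le \exp\Bigl(\lambda^2\Bigl(\sigma^2+\frac{\sigma^4}{16\beta^2}\Bigr)\Bigr)=\exp\Bigl(\frac{\lambda^2}{2}\Bigl(2\sigma^2+\frac{\sigma^4}{8\beta^2}\Bigr)\Bigr),
\]
so both the $2$ and the $8$ come from the Cauchy--Schwarz doubling. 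Lemma~\ref{lem:sequence} with parameter $\sigma\sqrt{2+\sigma^2/(8\beta^2)}$ then yields exactly the stated radius, with the single application explaining the $10.4/\delta$. Theorem~\ref{thm:stoch} finishes the proof, as in the paper.
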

\begin{proof}
We have that 
\begin{equation*}
\log \frac{\pi_\beta(y | x)}{\pi_{\rf}(y | x)} = \frac{r(x,y)}{\beta}-\log Z_\beta(x).
\end{equation*}
and so for each fixed $x$,
\begin{equation*}
    \kl(\pi_\beta(\cdot | x),\pi_{\rf}(\cdot | x)) = \E_{y \sim \pi_\beta(\cdot | x)}\left[\frac{r(x,y)}{\beta}-\log Z_\beta(x)\right],    
\end{equation*}
which implies that $M(\beta)=\E[\beta\log Z_\beta(x)]$.
By Jensen's inequality and Assumption~\ref{ass:cond-subG}, we have that 
\begin{equation*}
    0 \leq \beta\log Z_\beta(x) - \mathbb{E}_{y\sim\pi_{\rf}(\cdot | x)}[r(x,y)] \leq \frac{\sigma^2}{2\beta}
\end{equation*}
and $\mathbb{E}_{y\sim\pi_{\rf}(\cdot | x)}[r(x,y)]-\mu_{\rf}$ is $\sigma$-sub-Gaussian over $x\sim \cD$. Hence by Cauchy--Schwarz and Hoeffding's lemma,
\begin{equation*}
    \mathbb{E} \exp\left(\lambda\left(\beta\log Z_\beta(x) - M(\beta)\right) \right) \leq \exp\left(\frac{\lambda^2}{2}\left( 2\sigma^2+\frac{\sigma^4}{8\beta^2}\right)\right).
\end{equation*}
Applying Lemma~\ref{lem:sequence} with sub-Gaussian parameter $\sqrt{2\sigma^2+\frac{\sigma^4}{8\beta^2}}$ gives the desired confidence sequence. 
The result now follows from Theorem~\ref{thm:stoch}.
\end{proof}

\subsection{Approximate Equilibria}\label{app:app}

\begin{theorem}
    The following guarantees hold when the monitor and the agent use the tilted policy $\pi_{\beta_{\lo}}$ returned by Algorithm~\ref{alg:bisect}:
    \begin{enumerate}
        \item If the monitor is deploying an SPRT between $\pi_{\beta_{\lo}}$ and $\pi_{\rf}$, the agent can guarantee themselves utility at least $\beta_{\lo} \log(1/\alpha)$ by playing any policy $\pi$ such that $\E_{y \sim \pi(\cdot | x)} \left[\log \frac{\pi_{\beta_{\lo}}(y | x)}{\pi_{\rf}(y | x)} \right] > 0$ (including $\pi = \pi_{\beta_{\lo}}$).\footnote{Recall that $0 \leq \beta^\star - \beta_{\lo} \leq \epsilon$ with high probability, by Theorem~\ref{thm:stoch}.}
        \item If the agent is playing policy $\pi_{\beta_{\lo}}$ and the monitor is playing a power-one simple-vs-simple SPRT, then their best response is to test $\cH_0 \; : \; \pi = \pi_{\rf}$ versus $\cH_1 \; : \; \pi = \pi_{\beta_{\lo}}$. Moreover, as $\alpha \downarrow 0$, this is the optimal such test out of all power-one sequential tests for the monitor to play~\citep{wald1948optimum}. 
    \end{enumerate}
\end{theorem}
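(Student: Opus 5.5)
The first item can be proved by repeating the argument of Theorem~\ref{thm:br} with $\beta^\star$ replaced by $\beta_{\lo}$. The one new ingredient is that $\E[\log Z_{\beta_{\lo}}(x)]$ need no longer vanish, so we must show it has the right sign. The second item is an optimization over the alternative of a simple-vs-simple SPRT. There the main work is converting the expected-stopping-time identity into a leading-order statement as $\alpha \downarrow 0$. Throughout I assume $\beta_{\lo} > 0$; the degenerate case $\beta_{\lo}=0$ makes the claimed bound trivial.

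\textbf{Item 1.} Fix $\pi$ with $D(\pi) := \E_{y\sim\pi(\cdot|x)}[\log(\pi_{\beta_{\lo}}(y|x)/\pi_{\rf}(y|x))] > 0$.
\begin{enumerate}
\item Under $\pi$ the log-likelihood increments are i.i.d.\ with positive mean $D(\pi)$. Standard renewal theory, or Assumption~\ref{ass:finite-expected-stopping}, then gives $\E_\pi[\tau_\alpha] < \infty$.
\item By Wald's equation, the agent's utility equals $\E_\pi[\tau_\alpha]\cdot \E_\pi[r]$. Lemma~\ref{lem:R}, with $Q = \pi_{\beta_{\lo}}$, $P=\pi_{\rf}$ and $R$ the law induced by $\pi$, gives $\E_\pi[\tau_\alpha] \geq \log(1/\alpha)/D(\pi)$.
\item By \eqref{eq:tilt}, $D(\pi) = \E_\pi[r]/\beta_{\lo} - c$, where $c := \E[\log Z_{\beta_{\lo}}(x)]$.
\item As in the proof of Corollary~\ref{cor:inst2}, $M(\beta) = \beta\,\E[\log Z_\beta(x)]$. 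On the clean event, Theorem~\ref{thm:stoch} gives $\beta_{\lo} \leq \beta^\star$, so Lemma~\ref{lem:dink} yields $M(\beta_{\lo}) \geq 0$ and hence $c \geq 0$.
\item Since $D(\pi) > 0$ and $c \geq 0$, we get $\E_\pi[r] > \beta_{\lo} c \geq 0$. Therefore $\E_\pi[r]\log(1/\alpha)/D(\pi) \geq \E_\pi[r]\log(1/\alpha)/(\E_\pi[r]/\beta_{\lo}) = \beta_{\lo}\log(1/\alpha)$.
\item Taking $\pi = \pi_{\beta_{\lo}}$ is allowed, because then $D(\pi) = \kl(\pi_{\beta_{\lo}},\pi_{\rf}) > 0$ whenever $\pi_{\beta_{\lo}} \neq \pi_{\rf}$.
\end{enumerate}

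\textbf{Item 2.} Let $R$ be the law induced by $\pi_{\beta_{\lo}}$, and consider an SPRT with an arbitrary alternative $Q$ and null $P = \pi_{\rf}$.
\begin{enumerate}
\item Validity under the null follows from Ville's inequality, exactly as in Appendix~\ref{app:tie}. This holds for every $Q$, so all such tests are $\alpha$-correct.
\item Under $R$ the drift is $\E_R[\log dQ/dP] = \kl(R,P) - \kl(R,Q)$. This is at most $\kl(R,P)$, with equality iff $Q = R$; power one requires the drift to be positive.
\item Lemma~\ref{lem:R} gives $\E_R[\tau_\alpha] \geq \log(1/\alpha)/(\kl(R,P)-\kl(R,Q))$. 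A matching upper bound of the form $\log(1/\alpha)/\text{drift} + O(1)$ comes from Wald's identity together with a bound on the overshoot of a positive-drift random walk across the boundary $\log(1/\alpha)$ (e.g.\ via Lorden's inequality; see \citet{siegmund1985sequential}).
\item These two bounds show that, to leading order in $\log(1/\alpha)$, the expected stopping time is minimized uniquely at $Q = R$, i.e.\ at testing $\cH_1 : \pi = \pi_{\beta_{\lo}}$.
\item Optimality among all $\alpha$-correct power-one tests follows from the \citet{robbins1974expected} lower bound $\log(1/\alpha)/\kl(R,P)$, which this SPRT attains asymptotically by \eqref{eq:order} \citep{wald1948optimum}.
\end{enumerate}

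The main obstacle is step 3 of Item 2, the overshoot control. It needs a mild moment condition on the log-likelihood increments, such as finite second moment. I would simply assume this, in the same spirit as the paper's remark that the equilibrium statements hold at leading $\log(1/\alpha)$ order.
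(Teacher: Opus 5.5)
Your Item 1 argument is the paper's argument: Wald's equation, Lemma~\ref{lem:R}, rewriting the drift as $\frac{1}{\beta_{\lo}}\bigl(\E_{\pi}[r] - M(\beta_{\lo})\bigr)$ via the tilt, and $M(\beta_{\lo}) \geq 0$ from Lemma~\ref{lem:dink} on the clean event. You carry it out more carefully than the paper, since you verify $\E_\pi[\tau_\alpha] < \infty$ and $\E_\pi[r] > 0$, use $\geq 0$ rather than $>0$, and argue per policy rather than through a $\max$. For Item 2 the paper gives no proof beyond citing \citet{wald1948optimum} and the informal ``maximize expected evidence per sample'' remark in Section~\ref{sec:seq-det-game}, and your drift identity $\kl(R,P)-\kl(R,Q)$, combined with the overshoot bound and the Robbins lower bound, is a correct formalization of exactly that reasoning.
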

\begin{proof}
    \begin{equation*}
    \begin{aligned}
        \max_{\pi} \E_{y_t \sim \pi(\cdot | x_t)} \left[\sum_{t=1}^{\tau_{\alpha}} r(x_t, y_t) \right] &= \max_{\pi} \E_{y_t \sim \pi(\cdot | x)}[\tau_{\alpha}] \cdot \E_{y \sim \pi(\cdot | x)}[r(x, y)]\\
        &\geq \max_{\pi} \frac{E_{y \sim \pi(\cdot | x)}[r(x, y)] \log(1/\alpha)}{\E_{y \sim \pi(\cdot | x)} \left[\log \frac{\pi_{\beta_{\lo}}(y | x)}{\pi_{\rf}(y | x)} \right]}
    \end{aligned}
    \end{equation*}
    where the inequality follows from Lemma~\ref{lem:R} and the fact that $\beta_{\lo} > 0$ with high probability. 
    We can rewrite 
    \begin{equation*}
        \E_{y \sim \pi(\cdot | x)}[\log \frac{\pi_{\beta_{\lo}}(y | x)}{\pi_{\rf}(y | x)}] = \frac{1}{\beta_{\lo}} \E_{y \sim \pi(\cdot | x)}[r(x, y)] - \frac{1}{\beta_{\lo}} M(\beta_{\lo}).
    \end{equation*}
    Since $\beta_{\lo} \leq \beta^\star$, we know that $M(\beta_{\lo}) > 0$, and so 
    \begin{equation*}
        \max_{\pi} \frac{E_{y \sim \pi(\cdot | x)}[r(x, y)] \log(1/\alpha)}{\E_{y \sim \pi(\cdot | x)}[\log \frac{\pi_{\beta_{\lo}}(y | x)}{\pi_{\rf}(y | x)}]} \geq \beta_{\lo} \log(1/\alpha)
    \end{equation*}
\end{proof}

Observe that even if each agent runs Algorithm~\ref{alg:bisect} separately, if they stop after the same number of iterations they will arrive at the same value for $\beta_{\lo}$ with high probability.
With that being said, it is still possible to say something about the quality of the equilibrium whenever the monitor and the agent run Algorithm~\ref{alg:bisect} for a different number of iterations. 
\begin{theorem}
    Let $\beta_{A}$ (resp. $\beta_{M}$) be the agent's (resp. monitor's) computation of $\beta_{\lo}$ and suppose that $\beta_A \leq \beta_M$, i.e., the clean event holds and the monitor runs Algorithm~\ref{alg:bisect} for at least as long as the agent. 
    Then:
    \begin{enumerate}
        \item If the monitor is deploying a SPRT between $\pi_{\beta_{M}}$ and $\pi_{\rf}$, the agent can guarantee themselves utility at least $(\beta^\star - \epsilon) \log(1/\alpha)$ by playing policy $\pi_{\beta_A}$. 
        \item If the agent is playing policy $\pi_{\beta_{A}}$ and the monitor plays a power-one simple SPRT between $\cH_0 : \pi =  \pi_{\rf}$ and $\cH_1 : \pi = \pi_{\beta_M}$, then as $\alpha \downarrow 0$
        \begin{equation*}
            \tau^* \leq \E_{y \sim \pi_{\beta_A}}[\tau_{\alpha}] \leq \frac{\beta_M}{\beta_A} \cdot \tau^*,
        \end{equation*}
        where $\tau^*$ is the best possible expected stopping time. 
    \end{enumerate}
\end{theorem}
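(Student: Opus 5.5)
We will prove both parts by reducing everything to the one-dimensional function $M$. The key tool is the identity, valid for any policy $\pi$ and any $\beta>0$,
\begin{equation*}
    \E_{y \sim \pi(\cdot | x)}\left[\log \frac{\pi_{\beta}(y | x)}{\pi_{\rf}(y | x)}\right] = \frac{1}{\beta}\Bigl(\E_{y \sim \pi(\cdot | x)}[r(x,y)] - M(\beta)\Bigr).
\end{equation*}
This follows from \eqref{eq:tilt} together with $M(\beta)=\E[\beta \log Z_\beta(x)]$, as computed in the proof of Corollary~\ref{cor:inst2}.

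We will also use that $M$ is nonincreasing on $[0,\infty)$. By an envelope/Danskin argument, $M'(\beta) = -\E[\kl(\pi_\beta(\cdot|x),\pi_{\rf}(\cdot|x))] \le 0$. Alternatively, one can use convexity from Lemma~\ref{lem:dink} together with its sign pattern. Under the clean event we have $0<\beta_A \le \beta_M \le \beta^\star$ and $\beta^\star-\beta_A\le\epsilon$. Hence $M(\beta_A)\ge M(\beta_M)\ge 0$.

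\textbf{Part 1.} We follow the proof of the preceding theorem. By Wald's equation and Lemma~\ref{lem:R}, with $Q=\pi_{\beta_M}$, $P=\pi_{\rf}$ and $R=\pi_{\beta_A}$, the agent's utility is at least
\begin{equation*}
    \log(1/\alpha)\,\frac{\E_{\pi_{\beta_A}}[r]}{\E_{\pi_{\beta_A}}[\log(\pi_{\beta_M}/\pi_{\rf})]}.
\end{equation*}
First we check that the denominator is positive, which also yields a finite stopping time. The tilt $\pi_{\beta_A}$ attains $M(\beta_A)$, so
\begin{equation*}
    \E_{\pi_{\beta_A}}[r] = M(\beta_A)+\beta_A\,\E[\kl(\pi_{\beta_A}(\cdot|x),\pi_{\rf}(\cdot|x))] > M(\beta_A) \ge M(\beta_M).
\end{equation*}
The strict inequality holds because $\pi_{\beta_A}\neq\pi_{\rf}$ when $\beta_A<\infty$ and $r$ is non-constant, which is guaranteed by Assumption~\ref{ass:pos}. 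This also gives $\E_{\pi_{\beta_A}}[r]>0$. Using the identity above and $M(\beta_M)\ge 0$, the denominator is at most $\E_{\pi_{\beta_A}}[r]/\beta_M$. So the ratio is at least $\beta_M\log(1/\alpha)\ge\beta_A\log(1/\alpha)\ge(\beta^\star-\epsilon)\log(1/\alpha)$.

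\textbf{Part 2.} The lower bound $\tau^*\le\E_{\pi_{\beta_A}}[\tau_\alpha]$ holds by the definition of $\tau^*$ as the optimum over $\alpha$-correct tests. By the Robbins lower bound and the attainment in \eqref{eq:order}, we have $\tau^*\sim\log(1/\alpha)/\E[\kl(\pi_{\beta_A},\pi_{\rf})]$. For the upper bound we need the misspecified SPRT to satisfy
\begin{equation*}
    \E_{\pi_{\beta_A}}[\tau_\alpha] \le \frac{\log(1/\alpha)+o(\log(1/\alpha))}{\E_{\pi_{\beta_A}}[\log(\pi_{\beta_M}/\pi_{\rf})]}.
\end{equation*}
We would get this from Wald's identity applied to the random walk of log-likelihood ratios. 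At $\tau_\alpha$ the walk equals $\log(1/\alpha)$ plus an overshoot. The overshoot has expectation bounded uniformly in $\alpha$ (Lorden's bound), given a finite second moment of the increments $r/\beta_M-\log Z_{\beta_M}(x)$. That moment is available under Assumption~\ref{ass:subG}. This overshoot control is the main technical obstacle. Everything else is algebra.

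It then remains to compare drifts. From the identity,
\begin{equation*}
    \frac{\E_{\pi_{\beta_A}}[\log(\pi_{\beta_M}/\pi_{\rf})]}{\E[\kl(\pi_{\beta_A},\pi_{\rf})]} = \frac{\beta_A}{\beta_M}\cdot\frac{\E_{\pi_{\beta_A}}[r]-M(\beta_M)}{\E_{\pi_{\beta_A}}[r]-M(\beta_A)} \ge \frac{\beta_A}{\beta_M},
\end{equation*}
because $M(\beta_M)\le M(\beta_A)$ and both differences are positive, as shown in Part 1. Dividing and letting $\alpha\downarrow 0$ gives $\E_{\pi_{\beta_A}}[\tau_\alpha]\le(\beta_M/\beta_A)\,\tau^*$ at leading order.
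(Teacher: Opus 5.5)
Your proposal is correct and follows the paper's proof essentially step for step. It uses the same identity $\E_{\pi}[\log(\pi_\beta/\pi_{\rf})] = (\E_\pi[r]-M(\beta))/\beta$, the same monotonicity $M(\beta_A)\ge M(\beta_M)\ge 0$ to get a positive drift and the bound $\beta_M\log(1/\alpha)$ in Part 1, and the same drift comparison giving the factor $\beta_M/\beta_A$ in Part 2. The only addition is your Wald-plus-overshoot (Lorden) justification that the misspecified SPRT satisfies $\E_{\pi_{\beta_A}}[\tau_\alpha]\le(\log(1/\alpha)+O(1))/\text{drift}$, which the paper simply asserts when it equates the ratio of stopping times with the inverse ratio of drifts as $\alpha\downarrow 0$.
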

\begin{proof}
    Part 1: The key step is to show that $\E_{y \sim \pi_A(\cdot | x)}[\log \frac{\pi_M(y | x)}{\pi_{\rf}(y | x)}] > 0$. 
    To see this, observe that 
    \begin{equation*}
        \E_{y \sim \pi_{\beta_A}(\cdot | x)}\left[\log \frac{\pi_{\beta_M}(y | x)}{\pi_{\rf}(y | x)} \right] = \frac{1}{\beta_M} (\E_{y \sim \pi_{\beta_A}(\cdot | x)}[r(x, y)] - M(\beta_M))
    \end{equation*}
    and 
    \begin{equation*}
        \E_{y \sim \pi_{\beta_A}(\cdot | x)}[r(x, y)] =  M(\beta_A) + \beta_A \E[\kl(\pi_{\beta_A}(\cdot | x), \pi_{\rf}(\cdot | x))] > M(\beta_A) \geq M(\beta_M)
    \end{equation*}
    where the last inequality follows from the fact that $\beta_A \leq \beta_M$. 
    Given this, we can bound 
    \begin{equation*}
    \begin{aligned}
        \E_{y_t \sim \pi_{\beta_A}(\cdot | x_t)} \left[\sum_{t=1}^{\tau_{\alpha}} r(x_t, y_t) \right] &= \E_{y_t \sim \pi_{\beta_A}(\cdot | x)}[\tau_{\alpha}] \cdot \E_{y \sim \pi_{\beta_A}(\cdot | x)}[r(x, y)]\\
        &\geq \frac{E_{y \sim \pi_{\beta_A}(\cdot | x)}[r(x, y)] \log(1/\alpha)}{\E_{y \sim \pi_{\beta_A}(\cdot | x)} \left[\log \frac{\pi_{\beta_{M}}(y | x)}{\pi_{\rf}(y | x)} \right]}\\
        &= \frac{E_{y \sim \pi_{\beta_A}(\cdot | x)}[r(x, y)] \log(1/\alpha)}{\frac{1}{\beta_{M}} \E_{y \sim \pi_{\beta_A}(\cdot | x)}[r(x, y)] - \frac{1}{\beta_{M}} M(\beta_{M})}\\
        &\geq \beta_M \log(1/\alpha) \geq (\beta^\star - \epsilon) \log(1/\alpha)
    \end{aligned}
    \end{equation*}
    
    Part 2: 
    Let $\tau_{\alpha}(\beta_M)$ be the stopping time when the monitor is using a simple versus simple SPRT with alternative hypothesis $\pi = \pi_{\beta_M}$, and $\tau_{\alpha}(\beta_A)$ be the counterfactual stopping time when using alternative hypothesis $\pi = \pi_{\beta_A}$. 
    Consider the limit where $\alpha \downarrow 0$. 
    We have that 
    \begin{equation*}
    \begin{aligned}
        \E_{y \sim \pi_{\beta_A}(\cdot | x)} \left[\log \frac{\pi_{\beta_M}(\cdot | x)}{\pi_{\rf}(\cdot | x)}\right]
        &= \frac{1}{\beta_M} \E_{y \sim \pi_{\beta_A}(\cdot | x)}[r(x, y)] - \frac{1}{\beta_M} M(\beta_M)\\
        &\geq \frac{1}{\beta_M} \E_{y \sim \pi_{\beta_A}(\cdot | x)}[r(x, y)] - \frac{1}{\beta_M} M(\beta_A)\\
        &= \frac{\beta_A}{\beta_M} \left( \frac{1}{\beta_A} \E_{y \sim \pi_{\beta_A}(\cdot | x)}[r(x, y)] - \frac{1}{\beta_A} M(\beta_A) \right)
    \end{aligned}
    \end{equation*}
    Therefore, 
    \begin{equation*}
    \begin{aligned}
        \frac{\E_{y \sim \pi_{\beta_A}}[\tau_{\alpha}]}{\tau^*} &= \frac{\E_{y \sim \pi_{\beta_A}}[\tau_{\alpha}(\beta_M)]}{\E_{y \sim \pi_{\beta_A}}[\tau_{\alpha}(\beta_A)]}\\
        &= \frac{\E_{y \sim \pi_{\beta_A}(\cdot | x)} \left[\log \frac{\pi_{\beta_A}(y | x)}{\pi_{\rf}(y | x)} \right]}{\E_{y \sim \pi_{\beta_A}(\cdot | x)} \left[\log \frac{\pi_{\beta_M}(y | x)}{\pi_{\rf}(y | x)} \right]}
        \leq \frac{\beta_M}{\beta_A}.
    \end{aligned}
    \end{equation*}
\end{proof}

\section{Appendix for Section~\ref{sec:expts}: Experiments}\label{app:expts}
Policies are fine-tuned with rank 16 LoRA adapters~\citep{hu2022lora} using GRPO~\citep{shao2024deepseekmath} as the RL oracle and Adam~\citep{kingma2014adam} as the optimizer with learning rate 1e-4, temperature $1.0$, top-$p$ $1.0$, and a completion cap of 2048 tokens. 
Each call to the RL oracle consists of 15 optimization steps with 400 rollouts each. 
All models were trained using the Tinker API~\citep{thinkingmachines2025tinker}.

\subsection{Continual Learning}\label{app:cl}
\paragraph{Judge prompts.}
For each trained policy, we evaluated generated completions using two separate LLM-judge prompts: one for narrative coherence and one for grammatical coherence. The same prompts were used for both \gptfivemini and \gptfivenano.

\noindent\textbf{Narrative coherence system prompt:}
\begin{quote}
    You are an expert literary critic evaluating short stories written by a small language model. Your task is to rate the NARRATIVE COHERENCE of a story on an integer scale from 0 to 10.
    
    Narrative coherence means:
    
    The story has a recognizable beginning, middle, and end.
    Events follow each other in a sensible causal or temporal order.
    Characters and settings remain consistent throughout.
    The story arrives at some kind of resolution or conclusion.
    Ignore grammar and spelling mistakes -- those are evaluated separately. Focus only on the structure and coherence of the narrative.
    
    Respond with ONLY a single integer between 0 and 10, with no other text.
    0 = no discernible narrative; 10 = a complete, well-structured short story.
\end{quote}
\noindent\textbf{Grammatical coherence system prompt:}
\begin{quote}
    You are an expert linguist evaluating short stories written by a small language model. Your task is to rate the GRAMMATICAL COHERENCE of a story on an integer scale from 0 to 10.
    
    Grammatical coherence means:
    
    Sentences are syntactically well-formed.
    Subject-verb agreement, tense, pronouns, and articles are used correctly.
    Punctuation and capitalization roughly follow standard English conventions.
    Words are spelled correctly and used in plausible contexts.
    Ignore plot quality and narrative structure -- those are evaluated separately. Focus only on syntactic and morphological correctness.
    
    Respond with ONLY a single integer between 0 and 10, with no other text.
    0 = essentially ungrammatical; 10 = fully grammatical, idiomatic English.
\end{quote}

\begin{figure}
    \centering
    \includegraphics[width=\linewidth]{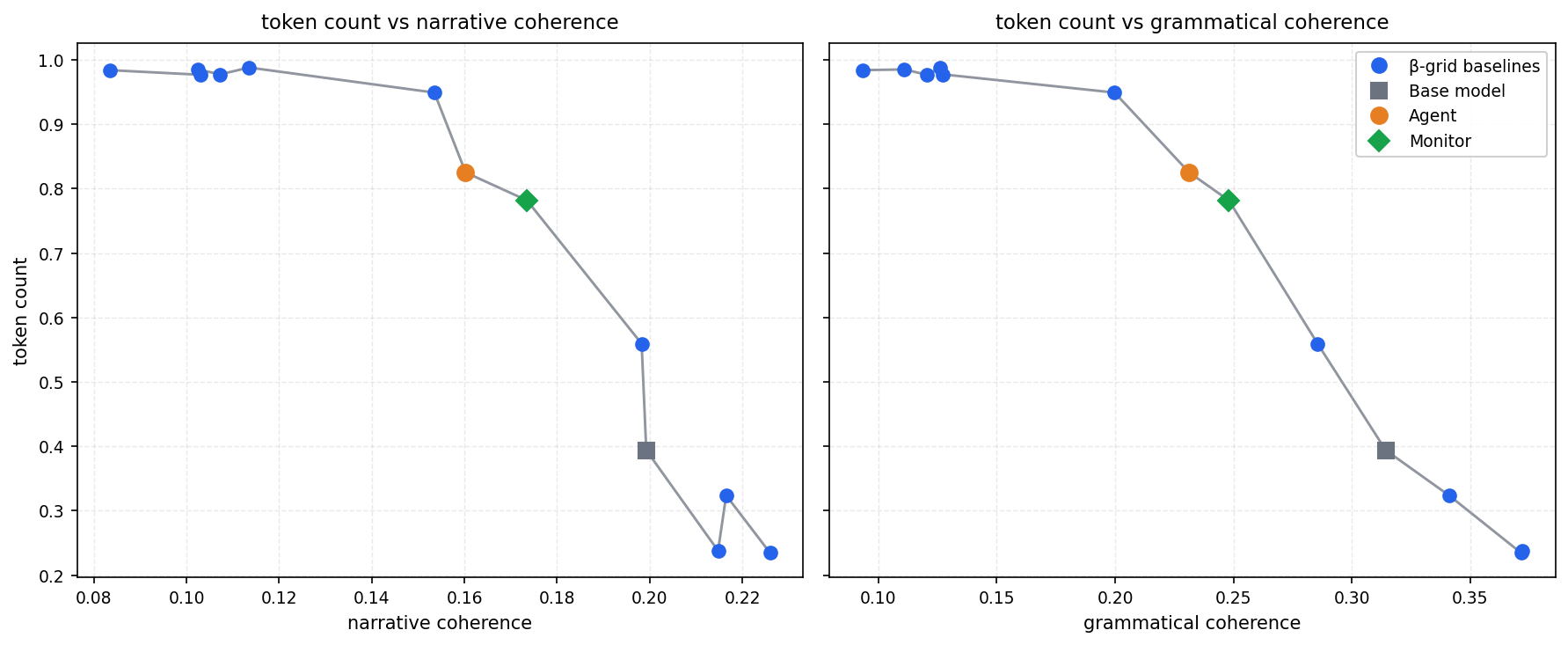}
    \caption{Reward--retention trade-offs for \llama with shift $\rho = 0.01$. Plotting conventions are the same as in Figure~\ref{fig:qwen}. In both settings, Algorithm~\ref{alg:bisect} selects policies near the elbow of the empirical trade-off curve traced out by the compute-matched $\beta$ grid.}
    \label{fig:llama_0p01}
\end{figure}

Figure~\ref{fig:llama_0p01} reports the additional \llama experiment with calibration margin $\rho=0.01$, using \gptfivemini as the judge.

\begin{figure}
    \centering
    \includegraphics[width=\linewidth]{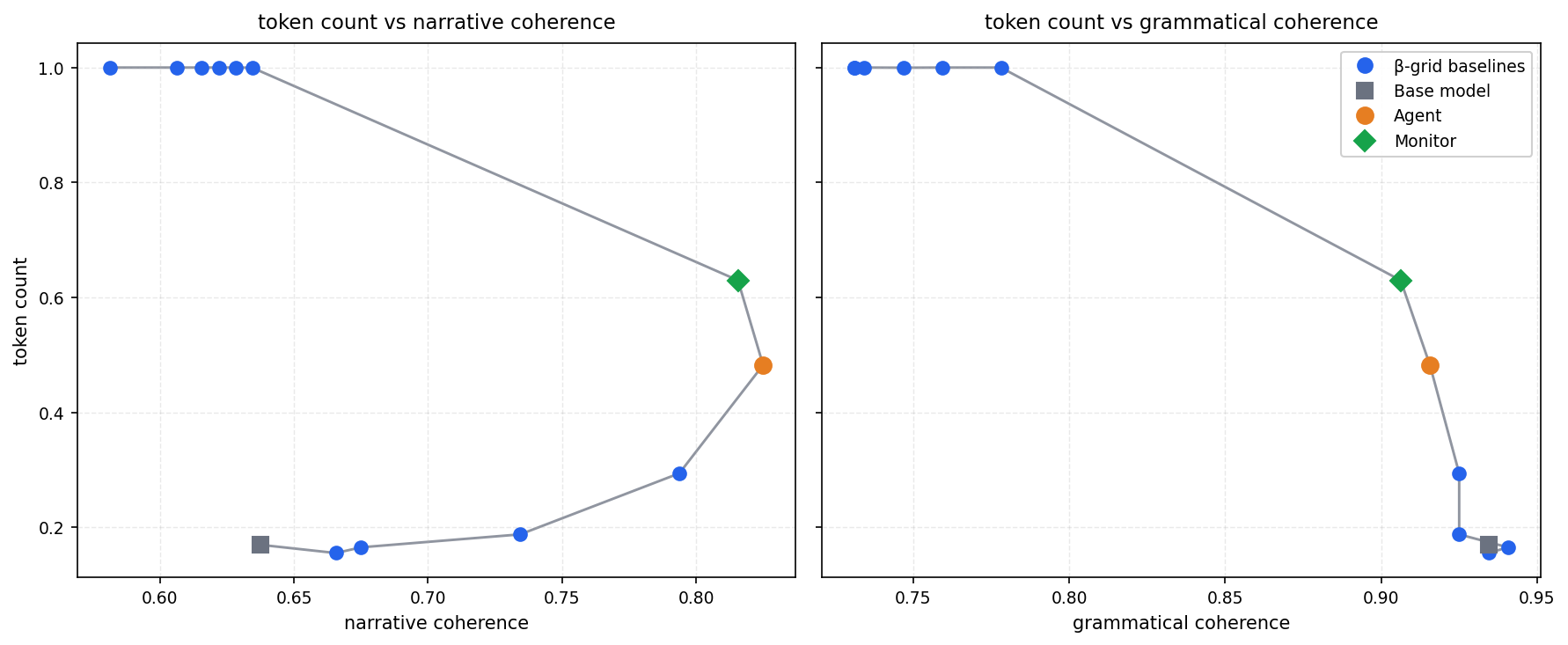}
    \caption{Reward--retention trade-offs for $\qwen$ with shift $\rho = 0.1$ and \gptfivenano-as-a-judge. All other plotting conventions are the same as in Figure~\ref{fig:qwen}.\looseness-1}
    \label{fig:qwennano}
\end{figure}

\begin{figure}
    \centering
    \includegraphics[width=\linewidth]{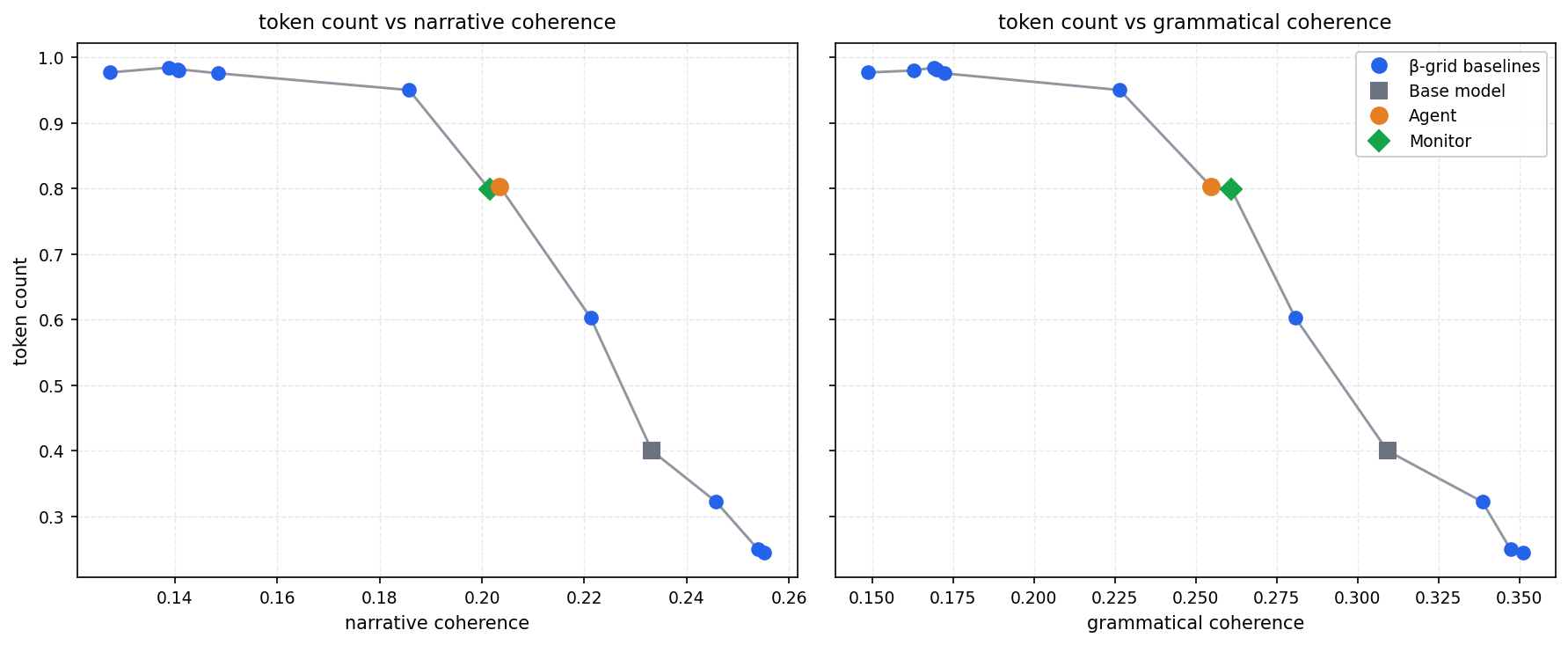}
    \caption{Reward--retention trade-offs for \llama with shift $\rho = 0.01$ and \gptfivenano-as-a-judge.}
    \label{fig:llama_0p01_nano}
\end{figure}

\begin{figure}
    \centering
    \includegraphics[width=\linewidth]{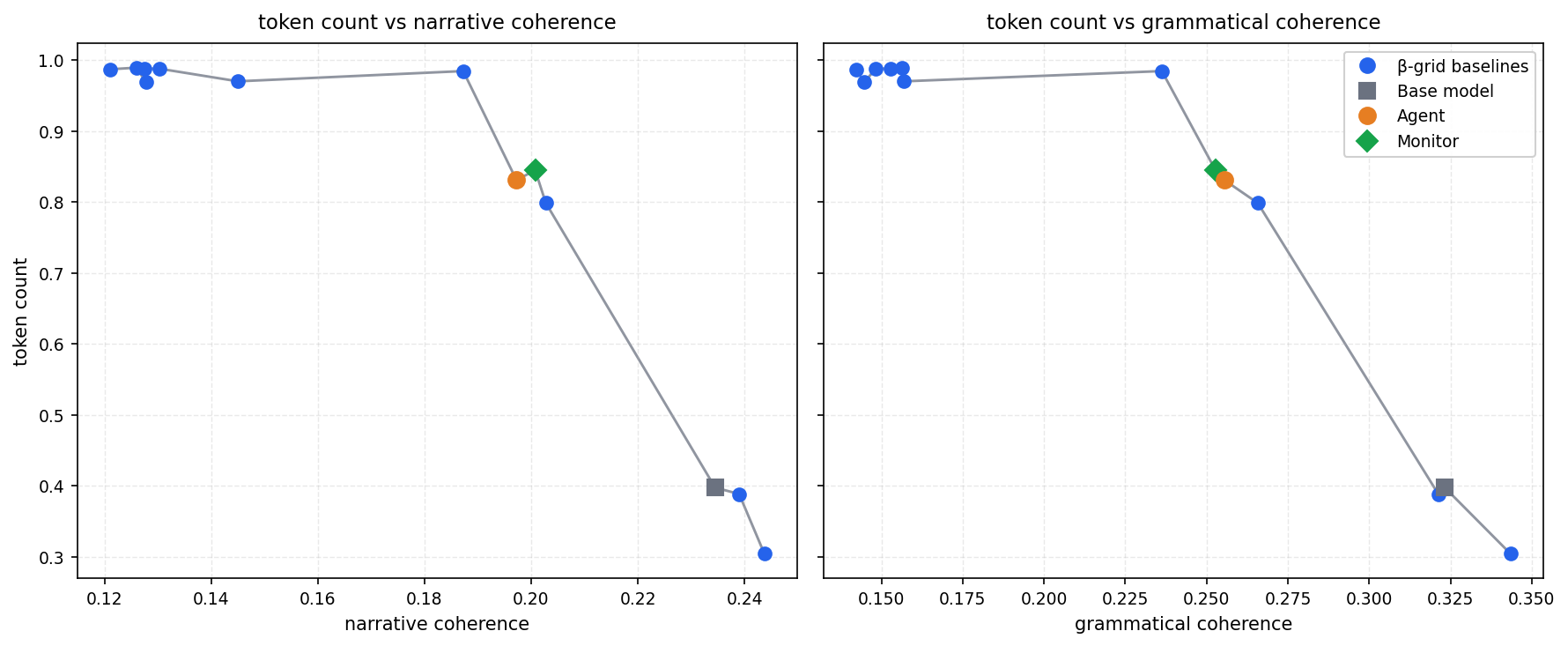}
    \caption{Reward--retention trade-offs for \llama with shift $\rho = 0.1$ and \gptfivenano-as-a-judge.}
    \label{fig:llama_0p1_nano}
\end{figure}

\begin{figure}
    \centering
    \includegraphics[width=\linewidth]{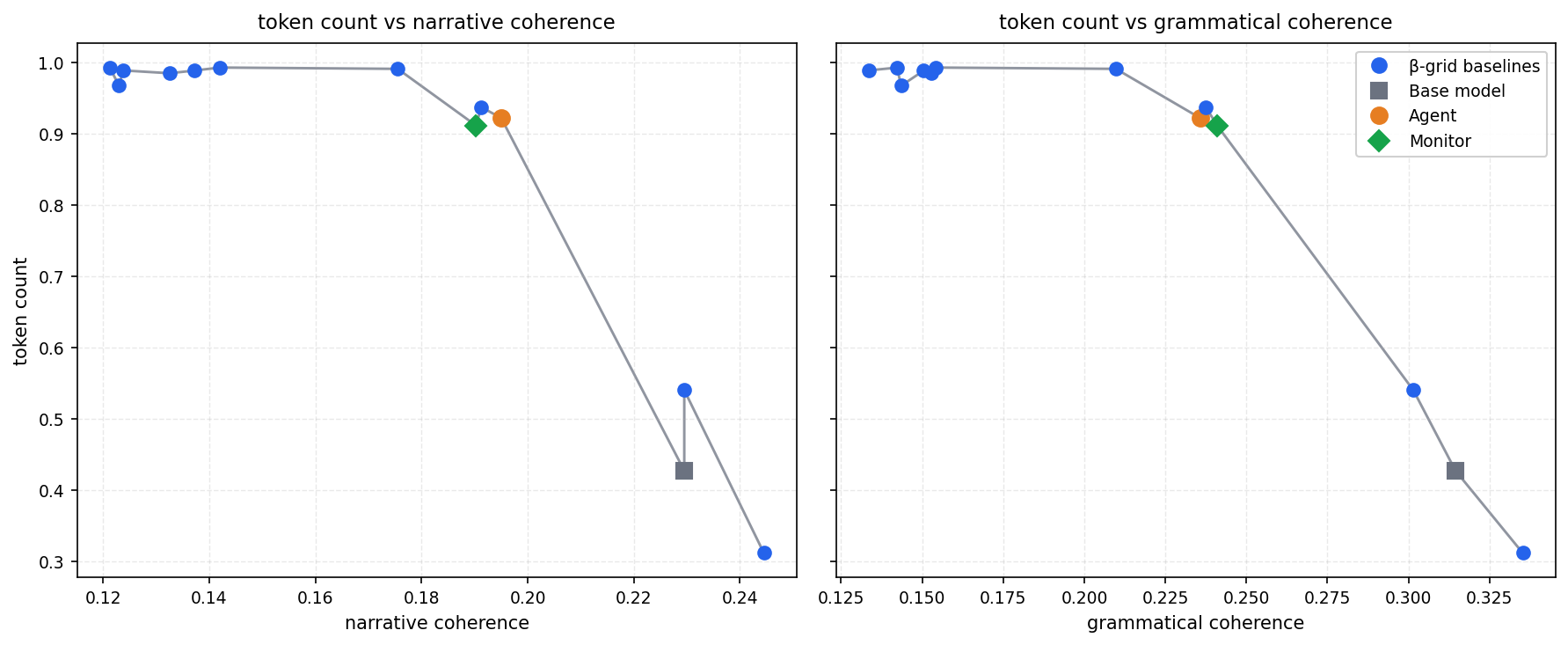}
    \caption{Reward--retention trade-offs for \llama with shift $\rho = 0.2$ and \gptfivenano-as-a-judge.}
    \label{fig:llama_0p2_nano}
\end{figure}

Figures~\ref{fig:qwennano}--\ref{fig:llama_0p2_nano} repeat the continual-learning experiments using \gptfivenano as the judge. 
The absolute coherence scores differ slightly from the \gptfivemini scores, but the qualitative pattern is unchanged: 
Across \qwen and \llama, and across the tested calibration margins, the stochastic-bisection policies fall in the transition region of the Pareto frontier rather than at either extreme of the compute-matched $\beta$ grid.

\paragraph{Plot details.}
This subsection contains the $\beta$ values that are used in all figures, and the order in which they are plotted. 
For \qwen, all $\beta$ values are listed in clockwise order.
For \llama, all $\beta$ values are listed from left to right. 

\noindent \textbf{\qwen with shift $\rho = 0.1$ details:}
\begin{itemize}
    \item Figure~\ref{fig:qwen}, left: $0.00058$, $2.693e-05$, $0.000125$, $0.00269$, $0.0$, $5.802e-06$, $0.008545$ (monitor), $0.008545$ (agent), $0.0125$, $0.0580$, $0.2693$, $1.25$, base model.
    \item Figure~\ref{fig:qwen}, right: $0.00058$, $5.802e-06$, $0.00269$, $2.693e-05$, $0.0$, $0.000125$, $0.008545$ (monitor), $0.008545$ (agent), $0.0125$, $0.0580$, $0.2693$, $1.25$, base model.
    \item Figure~\ref{fig:qwennano}, left: $0.00058$, $0.0$, $2.693e-05$, $0.00269$, $5.802e-06$, $0.000125$, $0.008545$ (monitor), $0.008545$ (agent), $0.0125$, $0.0580$, $1.25$, $0.2693$, base model.
    \item Figure~\ref{fig:qwennano}, right: $0.0$, $2.693e-05$, $0.00058$, $0.000125$, $0.00269$, $5.802e-06$, $0.008545$ (monitor), $0.008545$ (agent), $0.0125$, $0.0580$, $1.25$, $0.2693$, base model.
\end{itemize}

\noindent \textbf{\llama with shift $\rho = 0.01$ details:}
\begin{itemize}
    \item Figure~\ref{fig:llama_0p01}, left: $0.0004$, $0.00884$, $0.0$, $0.0019$, $8.8437e-05$, $0.041$, $0.093$ (agent), $0.093$ (monitor), $0.19$, base model, $4.10$, $0.884$, $19.05$
    \item Figure~\ref{fig:llama_0p01}, right: $0.0004$, $0.0$, $0.00884$, $8.8437e-05$, $0.0019$, $0.041$, $0.093$ (agent), $0.093$ (monitor), $0.19$, base model, $0.884$, $19.05$, $4.10$
    \item Figure~\ref{fig:llama_0p01_nano}, left:     $0.0004$, $0.0019$, $0.0$, $0.00884$, $8.8437e-05$, $0.041$, $0.093$ (monitor), $0.093$ (agent), $0.19$, base model, $0.884$, $19.05$, $4.10$
    \item Figure~\ref{fig:llama_0p01_nano}, right:     $0.0004$, $0.0$, $0.0019$, $0.00884$, $8.8437e-05$, $0.041$, $0.093$ (agent), $0.093$ (monitor), $0.19$, base model, $0.884$, $19.05$, $4.10$
\end{itemize}

\noindent \textbf{\llama with shift $\rho = 0.1$ details:}
\begin{itemize}
    \item Figure~\ref{fig:llama0p1}, left: $0.0$, $0.0042$, $4.202e-05$, $0.000195$, $0.000905$, $9.0536e-06$, $0.0195$, $0.08381$ (monitor), $0.08381$ (agent), base model, $0.0905$, $0.4202$, $1.95$
    \item Figure~\ref{fig:llama0p1}, right: $9.0536e-06$, $0.0042$, $4.202e-05$, $0.000195$, $0.0$, $0.000905$, $0.0195$, $0.08381$ (monitor), $0.08381$ (agent), $0.0905$, base model, $0.4202$, $1.95$
    \item Figure~\ref{fig:llama_0p1_nano}, left:     $0.0042$, $0.0$, $4.202e-05$, $0.000195$, $9.0536e-06$, $0.000905$, $0.0195$, $0.08381$ (agent), $0.08381$ (monitor), $0.0905$, base model, $0.4202$, $1.95$
    \item Figure~\ref{fig:llama_0p1_nano}, right:     $0.0042$, $4.202e-05$, $9.0536e-06$, $0.000195$, $0.0$, $0.000905$, $0.0195$, $0.08381$ (monitor), $0.08381$ (agent), $0.0905$, $0.4202$, base model, $1.95$
\end{itemize}

\noindent \textbf{\llama with shift $\rho = 0.2$ details:}
\begin{itemize}
    \item Figure~\ref{fig:llama0p2}, left: $9.352e-05$, $0.002$, $0.000434$, $4.34e-06$, $0.0$, $2.0148e-05$, $0.00935$, $0.0434$, $0.0603$ (monitor), $0.0603$ (agent), base model, $0.201$, $0.935$
    \item Figure~\ref{fig:llama0p2}, right: $4.34e-06$, $0.002$, $9.352e-05$, $0.0$, $0.000434$, $2.0148e-05$, $0.00935$, $0.0434$, $0.0603$ (agent), $0.0603$ (monitor), $0.201$, base model, $0.935$
    \item Figure~\ref{fig:llama_0p2_nano}, left: $9.352e-05$, $0.002$, $4.34e-06$, $0.000434$, $0.0$, $2.0148e-05$, $0.00935$, $0.0603$ (monitor), $0.0434$, $0.0603$ (agent), base model, $0.201$, $0.935$
    \item Figure~\ref{fig:llama_0p2_nano}, right: $4.34e-06$, $9.352e-05$, $0.002$, $0.0$, $0.000434$, $2.0148e-05$, $0.00935$, $0.0603$ (agent), $0.0434$, $0.0603$ (monitor), $0.201$, base model, $0.935$
\end{itemize}

\subsection{Model Auditing}
As a baseline, we fine-tune surrogate policies on an exponential $\beta$-grid (the same one as in Appendix~\ref{app:cl}) and, at each timestep, compute per-arm log-likelihood ratios against the reference on the same observed pairs. 
The mixture likelihood-ratio test aggregates evidence through a uniform mixture over grid points, stopping when the mixture statistic $\Lambda_t=\sum_k w_k \exp(L_{k,t})$ exceeds $1/\alpha$, where $L_{k,t}$ is the cumulative log-LR for arm $k$ and $w_k=1/K$. 
Under strategic sampling we compare the stopping times of both composites to those of the monitor and the bisection-optimal agent; under honest sampling we report the fraction of trials on which each composite crosses its threshold.

\end{document}